\documentclass{article}
\usepackage{amsthm, amssymb, amsmath,verbatim}
\usepackage[margin=1in]{geometry}
\usepackage{enumerate}
\usepackage{graphicx}
\graphicspath{ {images/} }
\usepackage{amsmath}

\newcommand{\E}{\mathbb{E}}
\newcommand{\genstirlingI}[3]{%
  \genfrac{[}{]}{0pt}{#1}{#2}{#3}%
}
\newcommand{\stirling}[2]{\genstirlingI{}{#1}{#2}}
\newcommand{\dtv}{d_{TV}}
\newcommand{\eps}{\epsilon}
\newcommand{\pr}{\mathrm{Pr}}

\newcommand\blfootnotea[1]{%
  \begingroup
  \renewcommand\thefootnote{}\footnote{#1}%
  \endgroup
}

\newtheorem*{claim}{Claim}
\newtheorem{thm}{Theorem}
\newtheorem{lem}[thm]{Lemma}

\newtheorem{defn}[thm]{Definition}
\newtheorem{cor}[thm]{Corollary}
\newtheorem{fact}[thm]{Fact}
\newtheorem{prop}[thm]{Proposition}

\setlength{\parskip}{0.75em}

\title{New Lower Bounds for Testing Monotonicity and Log Concavity of Distributions \blfootnotea{Authors are listed in randomized order.}}

\author{
Yuqian Cheng \\  {\tt chengyuqian6@gmail.com} \and
	Daniel M. Kane\thanks{Supported by NSF Medium Award CCF-2107547,
NSF Award CCF-1553288 (CAREER), a Sloan Research Fellowship, and a grant from CasperLabs.}\\
	UC San-Diego \\
	{\tt dakane@ucsd.edu}\\
	\and
Zhicheng Zheng \\ New York University \\ {\tt zz4230@nyu.edu}
}

\begin{document}
\maketitle
\begin{flushleft}
\section{Introduction}
\subsection{Background}

Given data from an unknown distribution, can one determine whether the underlying distribution has certain properties or not? In order to make this determination, how much data would be needed? These are the critical questions in the field of statistical hypothesis testing (see \cite{hypothesistesting}). Some of these questions, particularly when dealing with discrete distributions, have over the past few decades drawn the interest of computer scientists under the heading of distribution testing (see \cite{distributiontesting}).

In order to make such a determination possible, distribution testing algorithms are usually asked to distinguish between the cases where the unknown distribution $p$ either has the property $\mathcal{P}$ or is far from (usually in $L^1$ distance) any distribution with this property. The goal is usually to do this with as few samples as possible (ideally to within a constant factor of the information-theoretic limits) and in a computationally efficient manner. While many of the most basic questions such as testing for uniformity, identity, closeness and independence have all been resolved, many more complicated properties still have wide gaps between the best known algorithms and the best lower bounds. For a survey of progress in this field see \cite{survey} for a recent survey of the area.

In this paper, we develop a new lower bound technique for testing properties that are defined by inequalities between the probability masses of individual bins. In particular, this technique is used to produce new lower bounds for testing monotonicity and log-concavity, the latter of which matches known upper bounds to within polylog factors.

\subsection{Notation}

We use $[n]$ to denote the set $\{1,2,\ldots,n\}$. As we will usually be dealing in this paper with discrete distributions, for a distribution $p$ on a discrete set $S$ and an element $i\in S$, we let $p_i$ denote the probability that $p$ assigns to the element $i$. The distance considered in this paper will usually be the total variational distance denoted by $\dtv$, defined as $d_{TV}(p,q) := \frac{1}{2}||p-q||_1$. In particular, if $p$ and $q$ are distributions defined over the same discrete set $S$, $\dtv(p,q) = \frac{1}{2}\sum_{i\in S} |p_i - q_i|.$

By an \emph{ensemble} we will typically mean a probability distribution over probability distributions on some fixed set $S$.

\subsection{Our Results}

Our main applications have to do with the monotonicity test and log-concavity testing problems, so we begin by defining our terms.
\begin{defn}
We say that a distribution $p$ on $[n]$ is \emph{monotone} (decreasing) if $p_i \geq p_j$ for all $i < j$.
\end{defn}
In fact, we will also deal with monotone distributions defined over higher dimensional cubes. To make sense of this we first need to define a partial order on $[n]^d$:
\begin{defn}
For $\textbf{i},\textbf{j} \in [n]^d$, we say that $\textbf{i}<\textbf{j}$ if $\textbf{i}_a<\textbf{j}_a$ for all $1 \leq a \leq d$.

We say that a distribution $p$ on $[n]^d$ is \emph{monotone} if $p_{\textbf i} \geq p_{\textbf j}$ whenever $\textbf{i} < \textbf{j}$.
\end{defn}

We also define log-concavity as follows:
\begin{defn}
A distribution $p$ on $[n]$ is \emph{log-concave} if its support is a contiguous interval and its density function, $p_i$, satisfies that $p_i^2 \geq p_{i+1}p_{i-1}$ for all $2 \leq i \leq n-1$.
\end{defn}

Our main results are to prove new lower bounds for multidimensional monotonicity testing and log concavity testing. In particular, we show that algorithms that can reliably distinguish between a distribution having the desired property (monotonicity or log-concavity) and being $\eps$-far in total variation distance from any such distribution, must make use of a relatively large number of samples.

The result for multidimensional monotonicity testing is stated as below.
\begin{thm}\label{monotonicity thm}
For $\epsilon > 0$ sufficiently small, any algorithm that can distinguish whether the distribution over $[n]^d$ is monotone (for some $n$ at least a sufficiently large multiple of $d\log(1/\eps)$) or $\epsilon$ far from monotone with probability over $\frac{2}{3}$ requires at least $N = 2^{-O(d)}d^{-d} \eps^{-2} \log^{-7}(1/\eps) \min(n,d \eps^{-1} \log^{-3}(1/\eps))^d$ samples.
\end{thm}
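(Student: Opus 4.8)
The plan is to prove the bound by the two-ensemble (Le Cam) method, instantiated through the moment-matching lower-bound technique developed in this paper. Concretely, I would exhibit two \emph{ensembles} of distributions on $[n]^d$: a ``yes'' ensemble $\mathcal Y$ supported entirely on monotone distributions, and a ``no'' ensemble $\mathcal Z$ that with probability $1-o(1)$ outputs a distribution which is $\eps$-far in $\dtv$ from every monotone distribution, and then argue that a tester using $N$ samples cannot distinguish $\mathcal Y$ from $\mathcal Z$ with advantage bounded away from $0$. After Poissonization the tester sees, for each bin $\mathbf i\in[n]^d$, an independent $\mathrm{Poi}(N p_{\mathbf i})$; the technique reduces indistinguishability to showing that the two Poisson mixtures obtained by averaging over $p\sim\mathcal Y$ and over $p\sim\mathcal Z$ are close in a $\chi^2$ sense, which holds once the mixed moments $\E\big[\prod_r p_{\mathbf i_r}\big]$ of the two ensembles agree up to total degree $t=\Theta(\log(1/\eps))$ while the expected per-bin counts stay $O(t)$ --- precisely when $N$ is at most the quantity in the theorem.

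The heart of the construction is a one-dimensional gadget on $[n]$. Partition $[n]$ into $m=\Theta(\min(n,\delta^{-1}\log^{-3}(1/\eps)))$ consecutive blocks of length $k=n/m$ and superimpose two scales of perturbation on the flat density $1/n$: a fine, within-block perturbation of relative magnitude $\delta$ drawn from the \emph{same} law in $\mathcal Y$ and in $\mathcal Z$ --- whose sole purpose is to make the yes-distributions non-flat, so that collision statistics are useless and $N\gg\sqrt n$ is forced --- and a block-scale perturbation, also of relative magnitude $\delta$, carrying the monotonicity signal. In $\mathcal Y$ the vector of block masses is drawn from a monotone coupling of a stochastically decreasing family of marginals, so that (with the inter-block decrease chosen steep enough to swallow the within-block wiggle) the whole density is non-increasing; in $\mathcal Z$ it is drawn from a carefully ``scrambled'' coupling of the \emph{same} marginals, supported on profiles that are $\Omega(\delta)$-far from non-increasing on a constant fraction of consecutive blocks, so the realized density is $\Omega(\delta)$-far from monotone. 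The crux is to arrange the monotone and scrambled couplings so that \emph{all} of their mixed moments up to degree $t$ coincide while one lives on the sorted cone and the other stays bounded away from it; I would do this by a dimension count --- the couplings of a prescribed family of marginals on $k$ blocks form a polytope whose dimension far exceeds the $\binom{k+t}{t}$ linear constraints imposed by moment-matching once $k$ is a large enough multiple of $t$, which is the origin of the hypothesis $n\gtrsim d\log(1/\eps)$.

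To lift to $[n]^d$, I would take $\mathcal Y^{\otimes}$ to be the law of the product $p^{(1)}\otimes\cdots\otimes p^{(d)}$ with the $p^{(a)}$ independent copies of the one-dimensional yes-gadget at per-coordinate scale $\delta=\Theta(\eps/(d\log^3(1/\eps)))$, and $\mathcal Z^{\otimes}$ the analogous product in which \emph{every} coordinate is drawn from the no-gadget. A product of non-increasing one-dimensional densities is monotone on $[n]^d$ (each factor is non-increasing and the defining comparisons $\mathbf i<\mathbf j$ hold coordinatewise), so $\mathcal Y^{\otimes}$ is valid. For far-ness, the diagonal lines $\{\mathbf c+t\mathbf 1\}$ partition $[n]^d$ and any monotone density restricted to one of them is non-increasing; the restriction of a product is the product of the $d$ one-dimensional restrictions, and since each factor is $\Omega(\eps/d)$-far from non-increasing, passing to logarithms the disorder adds over the $d$ coordinates, so the restricted product is $\Omega(\eps)$-far from non-increasing on lines carrying $\Omega(1)$ of the mass (a routine concentration argument over the $O(dn^{d-1})$ lines handles the exceptional short lines), whence $\mathcal Z^{\otimes}$ is $\Omega(\eps)$-far from monotone. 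Because both ensembles are products over coordinates and $p_{\mathbf i}=\prod_a p^{(a)}_{i_a}$, every mixed moment factors into one-dimensional mixed moments of degree $\le t$, so moment agreement lifts for free. Feeding the parameters --- $m^d=\min(n,d\eps^{-1}\log^{-3}(1/\eps))^d$ effective bins, per-bin relative perturbation $\Theta(\eps)$, moment order $t=\Theta(\log(1/\eps))$ --- into the quantitative estimate of the technique, and collecting the $2^{-O(d)}$ from the $d$-fold product in the $\chi^2$ bound, the $d^{-d}$ from the per-coordinate normalization, and the $\mathrm{polylog}(1/\eps)$ overhead of the moment-matched gadget, yields exactly the stated $N$.

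The step I expect to be the main obstacle is the one-dimensional block-scale gadget: one needs a single stochastically decreasing family of marginals that simultaneously admits a coupling on the sorted (monotone) cone \emph{and} a coupling supported $\Omega(\delta)$-far from it, with mixed moments matching to order $\Theta(\log(1/\eps))$. The obvious candidates all fail --- reversing or pairwise-swapping a monotone profile changes its per-bin marginals at first order, so \emph{no} moment matching is possible --- and this is exactly why the monotonicity violation must be quarantined at the block scale while the fine scale is shared verbatim between $\mathcal Y$ and $\mathcal Z$. Making the scrambled coupling both far from the cone and moment-matched, and then tracking how the required moment order and block length feed into the exponents (in particular the $\log^{-7}(1/\eps)$, the $\log^{-3}(1/\eps)$ inside the minimum, and the $d^{-d}$), is where the real work lies; a secondary but necessary point is checking that the $\Omega(\eps)$ far-from-monotone guarantee for the $d$-dimensional product degrades only polynomially, not geometrically, in $d$, which the diagonal-line partition together with concentration of the total disorder over the $m^d$ blocks delivers.
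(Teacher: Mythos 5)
Your high-level strategy (two moment-matched ensembles, one monotone and one far from monotone, indistinguishable with few samples) is the right one and matches the paper's, but the concrete route you propose has gaps that I do not think are routine to fill. The most serious is the far-from-monotone guarantee for the product ensemble $\mathcal Z^{\otimes}=p^{(1)}\otimes\cdots\otimes p^{(d)}$. Under the paper's partial order, a violation of monotonicity at a comparable pair $\mathbf i<\mathbf j$ requires $\prod_a p^{(a)}_{\mathbf i_a} < \prod_a p^{(a)}_{\mathbf j_a}$, i.e.\ the upward ``blip'' contributed by one coordinate must exceed the \emph{product} of the downward drifts of all the other coordinates along that diagonal step. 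A relative blip of size $\Theta(\eps/d)$ in one factor is easily absorbed by the base decrease of the remaining $d-1$ factors, so a product of densities each $\Omega(\eps/d)$-far from non-increasing need not be far from monotone at all; ``the disorder adds in log space'' does not control $L^1$ distance to the monotone cone, and the diagonal-line restriction argument inherits exactly this cancellation problem. This is precisely why the paper abandons the product structure in $d$ dimensions and instead tiles $[n]^d$ by cubes of side $2d$, makes the two halfcubes of each cube have \emph{equal} base mass, and moves mass between them: a negative $\delta$ then creates violations on an explicit $(\tfrac{2d-1}{2d})^{d-1}$ fraction of comparable pairs that cannot be masked by any base drift. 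Two further gaps: (i) your existence argument for the scrambled, moment-matched coupling is a dimension count, which shows the affine space of signed measures meeting the moment constraints is large but does not produce a \emph{probability} measure supported $\Omega(\delta)$-far from the sorted cone (you flag this yourself as the main obstacle; the paper resolves it explicitly with the Chebyshev-node distributions $F^{A,g,m}_{yes/no}$, where $F_{yes}$ is almost surely non-negative and $F_{no}$ is negative with probability $1/m$); and (ii) your indistinguishability criterion ``mixed moments agree to degree $t=\Theta(\log(1/\eps))$ while per-bin counts stay $O(t)$'' is not met in the target regime --- with $N\approx n^d\eps^{-2}\mathrm{polylog}^{-1}(1/\eps)$ samples the expected count per pair of bins is $\Theta(\eps^{-2}\mathrm{polylog}^{-1}(1/\eps))\gg\log(1/\eps)$, so the standard Poissonized $\chi^2$ bound in that form does not apply; the correct condition (which the paper derives via a Taylor-remainder analysis of the conditional binomials) is that $B x_{max}^2\log s$ be small and $m\gtrsim\log s$, where $B$ is the per-pair count and $x_{max}$ the relative perturbation.

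On the points of genuine methodological difference: your within-block ``fine-scale'' perturbation shared between $\mathcal Y$ and $\mathcal Z$ is unnecessary here (the $\Omega(\sqrt{n^d}/\eps^2)$ uniformity barrier is not what this theorem targets, and the paper's base distribution is a fixed near-uniform staircase), and your Poissonization/$\chi^2$ framing, while standard, would have to be replaced or substantially adapted to reach the $\eps^{-2}\min(n,d\eps^{-1}\log^{-3}(1/\eps))^d$ scaling. If you repair the construction along the paper's lines --- explicit Chebyshev moment-matching, equal-mass halfcube pairs within cubes, and the conditional-count TV analysis of Proposition \ref{main bound prop} --- the rest of your outline (reduction from the tester to distinguishing the ensembles, restriction to $n_0$ for large $n$) goes through.
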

We get the following lower bound of log concavity testing of distribution over $[n]$.
\begin{thm}\label{log concavity thm}
For $\epsilon > 0$ sufficiently small, let $p$ be distribution over $[n]$ where $n$ is at least a sufficiently large multiple of $\log(1/\eps)$. Any algorithm that can distinguish whether the distribution is log concave or $\epsilon$ far from log concave with probability over $\frac{2}{3}$ requires at least $N = \Omega(\log^{-7}(1/\eps) \eps^{-2} \min(n,\eps^{-1/2}\log^{-3/2}(1/\eps)))$ samples.
\end{thm}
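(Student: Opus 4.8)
The plan is to prove Theorem~\ref{log concavity thm} by the ensemble version of Le Cam's two‑point method, which is precisely the setting our general lower‑bound technique is built for. I will construct a ``yes'' ensemble $\mathcal D_{\mathrm{yes}}$, supported entirely on log‑concave distributions on $[n]$, and a ``no'' ensemble $\mathcal D_{\mathrm{no}}$, supported on distributions that are $\eps$‑far in total variation from \emph{every} log‑concave distribution, such that the laws of $N$ i.i.d.\ samples from a random draw of $\mathcal D_{\mathrm{yes}}$ and from a random draw of $\mathcal D_{\mathrm{no}}$ are within total variation distance $1/3$ whenever $N$ is below the claimed bound; since any successful tester separates these two laws, this gives the lower bound. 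There are two parameter regimes, ``$n$ small'' (where the bound is $\tilde\Omega(n\eps^{-2})$, i.e.\ testing is essentially as hard as learning) and ``$n$ large'' (where it saturates at $\tilde\Omega(\eps^{-5/2})$); both come from choosing an effective support of size $M:=\Theta\!\big(\min(n,\eps^{-1/2}\log^{-3/2}(1/\eps))\big)$ and arranging the bound to be $\tilde\Omega(M\eps^{-2})$.

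For the construction I will take a smooth log‑concave base distribution $b$ whose support is an interval of length $M$ inside $[n]$ — for concreteness a discretized Gaussian scaled so that its log‑concavity inequalities $b_i^2\ge b_{i-1}b_{i+1}$ hold with a uniform slack of order $1/M^2$ — and partition the support into consecutive blocks of length $L=\Theta(\log(1/\eps))$. It is convenient to phrase the perturbations in terms of the log‑ratio sequence $\ell_i:=\log(b_{i+1}/b_i)$, since log‑concavity of a distribution is exactly the statement that its log‑ratio sequence is non‑increasing; this viewpoint is also why the effective support $\eps^{-1/2}\mathrm{polylog}$ here is essentially the square root of the monotonicity effective support $\eps^{-1}\mathrm{polylog}$ in Theorem~\ref{monotonicity thm}. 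Inside each block I add a small, locally supported random perturbation to $\ell$ (equivalently, a local multiplicative perturbation of $b$ confined to that block): in $\mathcal D_{\mathrm{yes}}$ the perturbations keep $\ell$ non‑increasing, so the distribution stays log‑concave, while in $\mathcal D_{\mathrm{no}}$ they introduce a genuine local increase of $\ell$ in each block; the two per‑block perturbation distributions are chosen to agree in their first few moment profiles (so that $\mathbb E[p_i]$, $\mathbb E[p_ip_j]$, and a few higher analogues coincide between the ensembles), and the amplitude is tuned so that each no‑perturbation forces $\Theta(\eps/M)$ of $L^1$ disagreement with any log‑concave distribution locally.

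The step I expect to be the main obstacle is the distance lower bound: showing that a typical member of $\mathcal D_{\mathrm{no}}$ is $\eps$‑far from the entire class of log‑concave distributions, not merely from the base $b$. A log‑concave competitor could try to absorb part of the oscillation, track the distribution on a portion of each block, or exploit the freedom in its support or exponential tails, so the argument must be made block by block — most plausibly via LP duality, exhibiting an explicit dual certificate in each block — using the quantitative fact that a local increase of size $\eta$ in the log‑ratio sequence over a window of width $w$ cannot be removed for less than $\Theta\big(\eta w^2\cdot(\text{mass of that window})\big)$ of $L^1$ movement, and that these local costs cannot be amortized against one another. Making this tight is exactly what pins the effective support at $\Theta(\eps^{-1/2})$ and is the source of the polylogarithmic losses: to keep the ensembles statistically close the no‑perturbation may exceed the base's intrinsic slack $\Theta(1/M^2)$ only slightly, so the per‑window violation it produces is only $\eta\sim w/M^2$, and the far‑ness requirement $\eta w^2\gtrsim \eps L$ together with $w\le L=\Theta(\log(1/\eps))$ forces $M\lesssim \eps^{-1/2}\mathrm{polylog}$ — beyond which enlarging the support buys nothing.

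For the indistinguishability step I will Poissonize the sample count so that the bin counts become independent, and bound the total variation between the two resulting mixtures of product distributions by a $\chi^2$/pairwise‑correlation computation of Ingster type, in which the governing quantities are the correlations $\sum_i p_ip_i'/b_i$ for $p,p'$ drawn from the two ensembles. Because the block perturbations have disjoint supports and matched low‑order moments, all cross‑block contributions and all low‑order terms cancel, and the surviving discrepancy is controlled by $N$ times the number of blocks times the square of the per‑block perturbation. The feature our general technique supplies here is that \emph{both} ensembles are randomized in a matched fashion, which is what keeps this quantity below a constant up to $N=\tilde\Omega(M\eps^{-2})$ rather than only $N=\tilde\Omega(\sqrt M\,\eps^{-2})$, as a naive ``compare each ensemble to the base $b$'' argument would give. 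Tracking the constants and the $\mathrm{polylog}(1/\eps)$ factors through this computation, and verifying the claimed moment matching, is routine; Le Cam's lemma then completes the proof.
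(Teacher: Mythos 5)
Your high-level architecture is the right one (two matched ensembles, local violations of log-concavity, an effective support of size $\min(n,\eps^{-1/2}\mathrm{polylog})$, a reduction for larger $n$), but the indistinguishability step as you describe it does not work, and it is the heart of the matter. You propose matching only ``a few'' moments of the per-block perturbations and then running an Ingster-type pairwise-correlation ($\chi^2$, i.e.\ second-moment) computation. Matching $O(1)$ moments cannot deliver $N=\tilde\Omega(M\eps^{-2})$: after cancelling $k=O(1)$ moments, the surviving per-block discrepancy is of order $\bigl(x\sqrt{B\log s}+x^2B\bigr)^{k+1}$ with $x=\tilde\Theta(\eps)$ the relative perturbation and $B=\Theta(N/s)$ the per-block sample count; at $N=\tilde\Theta(s/\eps^2)$ this base is only $\Theta(1/\mathrm{polylog}(1/\eps))$, and summing over $s=\Theta(M)$ blocks the bound explodes. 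The paper's construction matches the first $m-1$ moments \emph{exactly} for $m=\Theta(\log(1/\eps))$ — this is what the Chebyshev-node distributions $F^{A,g,m}_{yes/no}$ (Lemma \ref{F moment matching lem}) are for — so that only an order-$m$ Taylor remainder survives (Proposition \ref{main bound prop}, Corollary \ref{main bound cor}), giving a per-block error $O(Bx^2\log s)^{m/2}$ that is summable precisely because $m\gg\log s$. There is a genuine tension you must resolve that your sketch hides: the more moments you match, the harder it is for the ``no'' law to still go outside the log-concavity slack with non-negligible probability and margin; the paper gets probability $1/m$ and margin $A/m^2$ (Lemma \ref{F bounds lem}), and these $1/m$ factors are exactly where the $\log^{-3/2}$ inside the $\min$ and the $\log^{-7}$ overall come from. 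Without an explicit construction of this kind, your parameter accounting cannot be completed, and a genuine second-moment method caps out near the $\sqrt{M}/\eps^2$ bound you yourself concede is insufficient. (Your $L^2$-between-two-mixtures idea can be salvaged, but only with $\Theta(\log(1/\eps))$ exactly matched moments, at which point it is essentially the paper's argument in an $L^2$ rather than pointwise-Taylor form.)

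The second gap is the farness lemma, which you correctly flag as the main obstacle but then only assert: the claim that a log-ratio increase of size $\eta$ over a window of width $w$ costs $\Theta(\eta w^2\cdot\text{mass})$ in $L^1$ against \emph{every} log-concave competitor, with no amortization across windows, is nontrivial and no dual certificate is exhibited. The paper avoids the wide-window version entirely: it perturbs a single pair of bins $(6i-4,6i-1)$ in each group of six of a base $Q_i\propto e^{-(i/n)^2}$ whose slack $Q_{6i-1}-\sqrt{Q_{6i}Q_{6i-2}}=\Theta(1/n^3)$ is explicit, and proves the needed lower bound triple-by-triple with an elementary monotone-adjustment plus mean-value-theorem argument (Lemma \ref{distance from log concave lem}); the per-triple costs add because the triples are disjoint. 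I would recommend collapsing your blocks to this single-bin violation pattern — your $w\le L$ windows buy nothing, since the binding constraint $M\lesssim\eps^{-1/2}\mathrm{polylog}$ already arises at $w=O(1)$ — and importing the Chebyshev moment-matching gadget, at which point your outline becomes the paper's proof.
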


\subsection{Prior and Related Works}

The problem of monotonicity distribution testing was first considered by \cite{bkr} who developed a tester with sample complexity $O(\frac{\sqrt{n}\log n}{\epsilon^4})$ for distributions over $[n]$. Then the result was generalized to give a tester with sample complexity $\widetilde{O}(n^{d-\frac{1}{2}}\textrm{poly}(\frac{1}{\epsilon})) $ for distributions over $[n]^d$ by \cite{BFRV}. The best currently known result is by \cite{adk}, $O(\frac{n^{\frac{d}{2}}}{\epsilon^2}+(\frac{d\log n}{\epsilon^2})^d\frac{1}{\epsilon^2})$ for testing monotonicity of a distribution over $[n]^d$. On the other hand, the best lower bounds to date for this problem come from the lower bounds for uniformity testing giving a sample complexity of $\Omega(\sqrt{n^d}/\eps^2)$. While this shows that the algorithm of \cite{adk} is asymptotically optimal for $n$ much larger than $1/\eps$, it leaves a pretty substantial gap when $\eps$ is small.

For testing log concavity of a distribution over $[n]$, \cite{adk} provides the first known tester for the low sample regime of testing log concavity, which requires $O(\frac{\sqrt{n}}{\epsilon^2}+\frac{1}{\epsilon^5})$ samples. Then \cite{cdgr} gives an improved algorithm with sample complexity $O(\frac{\sqrt{n}}{\epsilon^{\frac{7}{2}}})$. The latest result for sample complexity of log concavity testing lies in \cite{cds}, $O(\frac{\sqrt{n}}{\epsilon^2})+\widetilde{O}(\frac{1}{\epsilon^{\frac{5}{2}}})$ for testing a distribution over $[n]$. Once again, previously known lower bounds for this problem were somewhat lacking, consisting only of the $\Omega(\sqrt{n}/\eps^2)$ lower bound for uniformity testing. However, importantly, if one combines this with our lower bound, it nearly matches the best of the upper bound of \cite{cds} and the trivial $O(n/\eps^2)$ bound from learning $p$ to error $\eps$. Together these show that the optimal sample complexity for testing log-concavity is $\left( \frac{\sqrt n}{\eps^2} + \frac{\min(n,\eps^{-1/2})}{\eps^2}\right)$ up to polylogarithmic factors.

\subsection{Our Techniques}

Our techniques come from a general framework for obtaining lower bounds for testing properties defined by imposing inequalities on the individual bin probabilities. Our starting point is fairly standard for such lower bounds: we want to construct two ensembles of distributions over some support set $S$, $D_{yes}$ and $D_{no}$ where the distributions in $D_{yes}$ have the property with high probability and distributions in $D_{no}$ far from having the property with high probability. Therefore,  if an algorithm can distinguish a distribution having the property or $\epsilon$ far from having the property through $N$ samples with probability at least $\frac{2}{3}$, then it should be able to distinguish $N$ samples from a distribution in $D_{yes}$ and $N$ samples from a distribution in $D_{no}$ with probability at least $\frac{3}{5}$. However, we show that if a distribution $p$ is randomly taken from either $D_{yes}$ or $D_{no}$, then a small number $N$ of samples from $p$ will be insufficient to reliably determine which of the two ensembles it was sampled from. In particular, our goal will be to show that the two resulting distributions on $S^N$ will be close in total variational distance, making such a determination information-theoretically impossible.

In order to construct these ensembles, a key insight is that we will need them to match moments. In particular, if $n_i$ is the number of samples drawn from the $i^{th}$ bin then the expectation of $\binom{n_i}{k}$ will be $\binom{N}{k}\E[p_i^k]$. If we want our ensembles to be nearly indistinguishable, it is thus a good idea to ensure that these moments- $\E_{p\sim D}[p_i^k]$- are the same for $D_{yes}$ and $D_{no}$ for all small (in our case at most logarithmic) values of $k$. This will typically ensure that $D_{yes}$ and $D_{no}$ are hard to distinguish.

On the other hand, if the property in question is defined by inequalities among the $p_i$ (like $p_i \geq p_{i+1}$ for monotonicity or $p_i^2 \geq p_{i-1}p_{i+1}$ for log-concavity), then these kinds of inequalities are not defined by moments. In particular, our goal will be to find a moment matching pair of ensembles $D_{yes}$ and $D_{no}$ so that distributions from $D_{yes}$ satisfy these inequalities, but distributions from $D_{no}$ do not.

For the mechanics of this construction, we begin by constructing a pair of moment distributions over real numbers $F_{yes}$ and $F_{no}$ with $\E[F_{yes}^k] = \E[F_{no}^k]$ for all small natural numbers $k$. These will be used to modify the bin probabilities of a base distribution $Q$. In particular, in a sample from $D_{yes/no}$ the bin probability of a bin $p_i$ will by $Q_i + A_iF_{yes/no}$ for some carefully chosen constants $A_i$ and with the samples from $F_{yes/no}$ coupled in such a way to ensure that $p$ remains properly normalized. On the other hand, we can construct our distributions $F_{yes/no}$ so that while $F_{yes}$ is positive almost surely, $F_{no}$ has a reasonable probability of being reasonably negative, which (assuming that $Q$ was constructed carefully) will break the inequalities defining the property in question.

In Section \ref{generic construction section}, we will explain the generic version of the construction of these ensembles $D_{yes}$ and $D_{no}$ and in particular prove Proposition \ref{main bound prop} to show that they are indistinguishable with a small number of samples. The next three sections will be applications. In particular, in Section \ref{one dim monotonicity sec}, as a warmup we will prove a lower bound for one-dimensional monotonicity testing. In Section, \ref{multidim monotonicity sec} we will generalize this to a lower bound for multidimensional monotonicity testing. Finally, in Section \ref{log concavity sec} we will prove our lower bound for testing log-concavity.

\section{Generic Lower Bound Construction}\label{generic construction section}

This section contains the key construction for our lower bound technique. In particular, we provide a general framework for producing two ensembles of distributions $D_{yes}$ and $D_{no}$ over some finite set $S$ that are hard to distinguish using few samples. In particular, for a positive integer $N$, we consider the two distributions over $S^N$ which we call $D_{yes}^N$ and $D_{no}^N$ given by taking a random distribution $p$ from the ensemble $D_{yes/no}$ and then returning $N$ i.i.d. samples from $p$. The key result here is Proposition \ref{main bound prop}, where we show that as long as $N$ is not too large relative to the other parameters of the construction that $\dtv(D_{yes}^N,D_{no}^N)$ is small, thus implying that one cannot reliably determine whether $p$ was taken from $D_{yes}$ or $D_{no}$ with only $N$ samples.

\subsection{Construction of $D_{yes}$ and $D_{no}$}

In this section, we describe the general procedure for construction ensembles $D_{yes}$ and $D_{no}$. The basic idea is to start with a fixed distribution $Q$ over $S$ and to tweak it slightly. In order to ensure that these tweaks match moments and preserve indistinguishability, we will first need to find a pair of real-valued distributions $F_{yes}$ and $F_{no}$ that match their low order moments, and we will use the outputs of these distributions to tweak the bin probabilities of $Q$. In order to ensure that the resulting distribution remains properly normalized, we will pair up a number of the bins in $S$ getting pairs $(j_1,k_1),(j_2,k_2),\ldots,(j_s,k_s)$ and ensure that any probability mass taken from $j_i$ is added to $k_i$ and visa versa. Finally, to decide the amount of mass to move we will sample $\delta_i$ proportional to $F_{yes/no}$ and our final distribution will have $p_{j_i} = Q_{j_i}+\delta_i$ and $p_{k_i} = Q_{k_i}-\delta_i$.

We begin by defining our distributions $F_{yes}$ and $F_{no}$, for which we will need to have a few free parameters:
\begin{defn}
Let $m$ be an integer, $A$ and $g$ be real numbers and $a$ be a uniformly random integer$\mod{m}$, we define the probability distributions $F_{yes}^{A,g,m}$ and $F_{no}^{A,g,m}$ to be the distribution of $A(\cos(\frac{2\pi a}{m}) + g)$ and $A(\cos(\frac{2\pi (a+\frac{1}{2})}{m}) + g)$.
\end{defn}
The first critical property of these distributions is that they match their first $m-1$ moments, which we can prove by making use of the Chebyshev polynomials $T_m(\cos\theta) := \cos(m\theta)$ for $\theta \in\left[0, \pi \right]$.
\begin{lem}\label{F moment matching lem}
For any positive integer $k$ less than $m$,
$$E_{\delta \in F_{yes}^{A,g,m}}[\delta^k]=E_{\delta \in F_{no}^{A,g,m}}[\delta^k].$$
\end{lem}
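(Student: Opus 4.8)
The plan is to reduce the equality of moments to a statement about sums of Chebyshev polynomials evaluated at equally spaced points. First I would expand $\delta^k$ for $\delta \sim F_{yes}^{A,g,m}$: since $\delta = A(\cos(2\pi a/m) + g)$ with $a$ uniform mod $m$, we have
$$\E_{\delta \sim F_{yes}^{A,g,m}}[\delta^k] = \frac{A^k}{m}\sum_{a=0}^{m-1}\left(\cos\left(\tfrac{2\pi a}{m}\right) + g\right)^k.$$
Similarly $\E_{\delta \sim F_{no}^{A,g,m}}[\delta^k] = \frac{A^k}{m}\sum_{a=0}^{m-1}\left(\cos\left(\tfrac{2\pi (a+1/2)}{m}\right) + g\right)^k$. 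So it suffices to show that for each $k < m$ the two sums agree. Expanding $(\cos\theta + g)^k$ by the binomial theorem, it is in turn enough to prove that for every $0 \le \ell \le k < m$,
$$\sum_{a=0}^{m-1}\cos^\ell\left(\tfrac{2\pi a}{m}\right) = \sum_{a=0}^{m-1}\cos^\ell\left(\tfrac{2\pi (a+1/2)}{m}\right).$$

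Next I would rewrite $\cos^\ell\theta$ in the Chebyshev basis: there exist constants $c_0,\dots,c_\ell$ (independent of $\theta$) with $\cos^\ell\theta = \sum_{j=0}^{\ell} c_j T_j(\cos\theta) = \sum_{j=0}^{\ell} c_j \cos(j\theta)$, using $T_j(\cos\theta) = \cos(j\theta)$. Hence both sides reduce to linear combinations of $S_j^{yes} := \sum_{a=0}^{m-1}\cos(2\pi j a/m)$ and $S_j^{no} := \sum_{a=0}^{m-1}\cos(2\pi j(a+1/2)/m)$ for $0 \le j \le \ell \le k < m$. The key computation is that for $0 < j < m$, writing things as real parts of geometric series in the root of unity $\zeta = e^{2\pi i j/m}$, we get $S_j^{yes} = \mathrm{Re}\,\sum_{a=0}^{m-1}\zeta^a = 0$ because $\zeta \ne 1$ and $\zeta^m = 1$; and likewise $S_j^{no} = \mathrm{Re}\left(e^{\pi i j/m}\sum_{a=0}^{m-1}\zeta^a\right) = 0$. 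For $j = 0$ both $S_0^{yes}$ and $S_0^{no}$ equal $m$. So $S_j^{yes} = S_j^{no}$ for all $0 \le j < m$, and since every index $j$ appearing above satisfies $j \le k < m$, the two binomial expansions match term by term, giving the claimed equality of $k$-th moments.

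The only mild subtlety — not really an obstacle — is bookkeeping the range of indices: one must check that the largest Chebyshev mode that appears is $j = k$, and that the hypothesis $k < m$ is exactly what guarantees $\zeta = e^{2\pi i j/m} \neq 1$ for all relevant nonzero $j$, so that the geometric sums vanish. If $k$ were allowed to equal $m$, the mode $j = m$ would contribute $\sum_a \cos(2\pi a) = m$ on the ``yes'' side but $\sum_a \cos(2\pi(a+1/2)) = \sum_a \cos(2\pi a + \pi) = -m$ on the ``no'' side, and the moments would differ; so the bound $k < m$ is sharp and must be used. I would present the argument in the order above: reduce to power sums of cosines, pass to the Chebyshev/exponential basis, evaluate the resulting roots-of-unity sums, and reassemble.
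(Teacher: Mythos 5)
Your proof is correct. You reduce the $k$-th moment equality, via the binomial expansion of $(\cos\theta+g)^k$, to the power sums $\sum_a \cos^\ell(2\pi a/m)$ versus $\sum_a \cos^\ell(2\pi(a+1/2)/m)$, rewrite $\cos^\ell\theta$ in the basis $\{\cos(j\theta)\}_{j\le \ell}$, and evaluate the resulting sums as real parts of geometric series in roots of unity, which vanish for $0<j<m$ and equal $m$ for $j=0$. All of these steps are valid, and the index bookkeeping ($j\le \ell\le k<m$ guaranteeing $e^{2\pi i j/m}\neq 1$) is handled correctly. This is a genuinely different route from the paper's. The paper instead observes that the two point multisets are precisely the root multisets of $T_m(x)-1$ and $T_m(x)+1$; since these polynomials differ only in their constant term, all elementary symmetric polynomials of the roots of degree less than $m$ coincide, and Newton's identities (via the fundamental theorem of symmetric polynomials) then force the power sums of degree less than $m$ to coincide, after which the affine map $x\mapsto A(x+g)$ transfers the matching to the actual distributions. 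The paper's argument is shorter and avoids explicit summation, but leans on the symmetric-function machinery and on the (unstated) fact that the $m$ cosine values, counted with multiplicity, exhaust the roots of $T_m\mp 1$. Your argument is more elementary and self-contained, requiring only geometric series, and it has the added benefit of making the sharpness of the bound $k<m$ completely transparent: the $j=m$ mode contributes $+m$ on one side and $-m$ on the other, which is exactly the constant-term discrepancy the paper's proof sweeps into ``differ by a constant.''
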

\begin{proof}
Note that the roots of $T_m(x) + 1$ and $T_m(x) - 1$ are ${\cos(\frac{2\pi(a+\frac{1}{2})}{m})}_{0 \leq a < m}$ and ${\cos(\frac{2\pi a}{m})}_{0 \leq a < m}$ respectively. Since $T_m(x) + 1$ and $T_m(x) - 1$ only differ by a constant, all elementary symmetric polynomials of degree less than $m$ of roots of one agree with the corresponding polynomials of roots of the other. By the fundamental theorem on symmetric polynomials, for roots of a polynomial $r$, $\sum_{r} r^k$ can be written in terms of elementary symmetric polynomials, where $\sum_{r} r^k$ is proportional to the $k$th moment of roots. Since the roots have $m-1$ identical elementary symmetric polynomials, we can conclude that they have $m-1$ matching moments. In particular, this means that
$$
\E\left[ \cos(\frac{2\pi(a+\frac{1}{2})}{m}) \right] = \E\left[ \cos(\frac{2\pi a}{m}) \right].
$$

Applying the linear transformation $x \rightarrow A(x+g)$, we note that the distributions $A(\cos(\frac{2\pi a}{m}) + g)$ and $A(\cos(\frac{2\pi (a+\frac{1}{2})}{m}) + g)$ must also have $m-1$ matching moments as $$\E[(A(x+g))^k] = \sum_{k'=0}^k A^k\binom{k}{k'}g^{k-k'} \E[x^{k'}].$$
\end{proof}
We are now ready to define $D_{yes}$ and $D_{no}$ below:
\begin{defn}\label{D yes no def}
Suppose that we have:
\begin{itemize}
\item A distribution $Q$ over a finite set $S$.
\item A set of disjoint pairs of elements of $S$: $(j_1,k_1),(j_2,k_2),\ldots,(j_s,k_s)$.
\item A positive integer $m$.
\item Two sequences of real numbers $(A_i)_{1 \leq i \leq s}$ and $(g_i)_{1 \leq i \leq s}$ so that $|A_i| \leq \min (\frac{Q_{j_i}}{1+|g_i|}, \frac{Q_{k_i}}{1+|g_i|})$ for all $i$.
\end{itemize}
Given this, we define a pair of ensembles of distributions over $S$, $D_{yes}$ and $D_{no}$ as follows:

To select a distribution $p$ from $D_{yes/no}$, we first select $\delta_i$ independently from $F^{A_i,g_i,m}_{yes/no}$ for each $1\leq i \leq s$. For $a\in S$ with $a$ not equal to any $j_i$ or $k_i$, we let $p_a = Q_a$. Otherwise, we let $p_{j_i}=Q_{j_i}+\delta_i$ and $p_{k_i}=Q_{k_i}-\delta_i$.

\end{defn}
Note that as $|A_i| \leq \min (\frac{Q_{j_i}}{1+|g_i|}, \frac{Q_{k_i}}{1+|g_i|})$, $p_a$ in non-negative for all $a\in S$. In addition, for each $i$, $p_{j_i}+p_{k_i}=Q_{j_i}+Q_{k_i}$, from which it is not hard to see that $\sum_{a\in S}p_a$ is always $1$. These observations confirm that $p$ is in fact defines a probability distribution over $S$.

Another important remark is that conditioned on a sample from $p$ landing in the $i$th pair of bins, $(j_i,k_i)$, the probability of it landing in the $j_i$th bin depends only on the value of $\delta_i$, and $\delta_i$s are independently sampled from $F_{yes/no}^{A_i,g_i,m}$. This is a crucial condition our key proposition needs.

For this construction, we are hoping to prove the following proposition,
\begin{prop}\label{main bound prop}
Given $Q,(A_i)_{1 \leq i \leq s},(g_i)_{1 \leq i \leq s},m,(j_i,k_i)_{1 \leq i \leq s}$ be as above and let $D_{yes}$ and $D_{no}$ be as in Definition \ref{D yes no def}. Assume furthermore, that $m$ is at least a $C \log(s)$ for some sufficiently large constant $C$.

For integers $N>\frac{6\log s}{\min_{i} (Q_{j_i}+Q_{k_i})}$, we define $D_{yes}^N$ to be the distribution on $S^N$ obtained by first taking a random distribution $p$ from $D_{yes}$ and then taking $N$ independent samples from $p$, and define $D_{no}^N$ similarly. Then letting $x_{max} = \max_{1\leq i\leq s} \left(\frac{|A_i|(1+|g_i|)}{\min(Q_{k_i},Q_{j_i})} \right)$ and $B = 2\max_{1\leq i\leq s} (Q_{j_i}+Q_{k_i})N$, then if $x_{max} < 1/10$, we have that $d_{TV}(D_{yes}^N , D_{no}^N )$ is at most
$$
O(1/s) + m^4 s O(\sqrt{B \log(s)} + x_{max}B)(1+x_{max})^{O(\sqrt{B \log(s)} + x_{max}B)}O(\sqrt{x_{max}^2B \log(s)} + x_{max}^2 B)^m.
$$
\end{prop}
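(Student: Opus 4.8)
The plan is to reduce the comparison of the two $N$-sample distributions to a product, over the $s$ pairs, of one-dimensional comparisons, and to control each of those by a $\chi^2$-type computation that exploits the moment matching of Lemma \ref{F moment matching lem}.

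\textbf{Reduction to pairs.} First I would pass from the full distributions on $S^N$ to the ``coarsened'' data recording, for each sample, which pair $(j_i,k_i)$ it landed in (or that it fell outside all pairs). Since the total mass $Q_{j_i}+Q_{k_i}$ of the $i$-th pair is the same for every distribution in $D_{yes}\cup D_{no}$ (it does not even depend on $\delta_i$), this coarsened sequence has exactly the same law under $D_{yes}^N$ and $D_{no}^N$; and conditioned on it, the remaining randomness factors as an independent product over pairs, where within pair $i$ — writing $t_i$ for the number of samples there — the only relevant statistic is the number landing in $j_i$, distributed as $\mathrm{Bin}(t_i,q_0^{(i)}+W_i)$ with $q_0^{(i)}=Q_{j_i}/(Q_{j_i}+Q_{k_i})$ and $W_i=\delta_i/(Q_{j_i}+Q_{k_i})$. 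Convexity of total variation then gives
\[
\dtv(D_{yes}^N,D_{no}^N)\le \E_{\vec t}\Big[\dtv\big(\textstyle\bigotimes_i P_i(t_i),\ \bigotimes_i R_i(t_i)\big)\Big],
\]
with $P_i(t),R_i(t)$ the laws of the $j_i$-count when $\delta_i\sim F^{A_i,g_i,m}_{yes}$ resp. $F^{A_i,g_i,m}_{no}$. A Chernoff bound on each $t_i\sim\mathrm{Bin}(N,Q_{j_i}+Q_{k_i})$ — whose mean exceeds $6\log s$ by the hypothesis on $N$ — together with a union bound over the $s$ pairs gives $\Pr[\exists i:t_i>B]=O(1/s)$ (the source of the $O(1/s)$ term); on the complementary event I would use subadditivity of total variation over products to reduce everything to bounding $s\cdot\max_i\sup_{t\le B}\dtv(P_i(t),R_i(t))$.

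\textbf{Per-pair bound.} Fix a pair (dropping indices), set $q_1=1-q_0$, and let $P_0=\mathrm{Bin}(t,q_0)$ be the unperturbed binomial. Writing $M_F$ for the mixed binomial and $L_F=M_F/P_0$, Cauchy--Schwarz gives $2\dtv(M_{yes},M_{no})=\sum_c P_0(c)|L_{yes}(c)-L_{no}(c)|\le \E_{P_0}[(L_{yes}-L_{no})^2]^{1/2}$. The key computation is that for independent $W\sim F$, $W'\sim F'$,
\[
\E_{P_0}\big[L_F L_{F'}\big]=\E_{W,W'}\!\Big[\big(1+\tfrac{WW'}{q_0q_1}\big)^{t}\Big],
\]
obtained by writing $L_F(c)=\E_W[(1+W/q_0)^c(1-W/q_1)^{t-c}]$ and summing the binomial series in $c$. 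Expanding $(1+u)^t$, collecting terms, and using that the first $m-1$ moments of $W$ agree between $F_{yes}$ and $F_{no}$ (Lemma \ref{F moment matching lem}, which is preserved under the constant rescaling $\delta\mapsto W$) yields the clean identity
\[
\E_{P_0}[(L_{yes}-L_{no})^2]=\sum_{k=m}^{t}\binom tk (q_0q_1)^{-k}\big(\mu_k^{yes}-\mu_k^{no}\big)^2,
\]
where $\mu_k^{\bullet}$ is the $k$-th moment of $W$ under $F^{A,g,m}_{\bullet}$.

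\textbf{Tail estimate and conclusion.} Here I would use $|\delta|\le |A|(1+|g|)$ and the fact that the definition of $x_{max}$ forces $|A|(1+|g|)\le x_{max}\min(Q_j,Q_k)$, so $|W|\le x_{max}\min(q_0,q_1)$ and $|\mu_k^{\bullet}|\le (x_{max}\min(q_0,q_1))^k$; hence $(q_0q_1)^{-k}(\mu_k^{yes}-\mu_k^{no})^2\le 4x_{max}^{2k}(\min(q_0,q_1)/\max(q_0,q_1))^k\le 4x_{max}^{2k}$, so $\E_{P_0}[(L_{yes}-L_{no})^2]\le 4\sum_{k\ge m}\binom tk x_{max}^{2k}\le 4\sum_{k\ge m}(tx_{max}^2)^k/k!$. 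With $t\le B$ this is a Poisson-type tail in $Bx_{max}^2$; since the stated bound is trivial (because $\dtv\le 1$) unless $x_{max}^2B\log s$ is small — and then $x_{max}<1/10$ and $m\ge C\log s$ make $Bx_{max}^2$ a small fraction of $m$ — the sum is $O((eBx_{max}^2/m)^{m})$, giving $\dtv(P_i(t),R_i(t))=O((Bx_{max}^2/m)^{m/2})$ and therefore
\[
\dtv(D_{yes}^N,D_{no}^N)=O(1/s)+s\cdot O\big((Bx_{max}^2/m)^{m/2}\big).
\]
After the crude bounds $(Bx_{max}^2/m)^{m/2}\le (x_{max}^2B\log s)^{m/2}\le O(\sqrt{x_{max}^2B\log s}+x_{max}^2B)^m$ and absorbing the $s$ into the $m^4 s(\sqrt{B\log s}+x_{max}B)(1+x_{max})^{\cdots}$ prefactor, this is subsumed by the claimed estimate.

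\textbf{Main obstacle.} The two places needing care are (i) making the reduction to pairs fully rigorous — that the coarsened sample sequence is genuinely identically distributed and that conditioning on it decouples the pairs and collapses each pair to its $j_i$-count; this structural fact is precisely why the construction is built from disjoint pairs — and (ii) the bookkeeping in the per-pair $\chi^2$ computation, especially deriving the closed form for $\E_{P_0}[L_FL_{F'}]$ and keeping the interplay between $\mu_k$, the factor $(q_0q_1)^{-k}$, and $x_{max}$ under control, where the size constraint $|A_i|\le\min(Q_{j_i},Q_{k_i})/(1+|g_i|)$ is used exactly to bound every term by $x_{max}^{2k}$. The remaining Chernoff and Poisson-tail estimates are routine.
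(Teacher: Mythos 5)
Your proof is correct, but the core per-pair estimate proceeds by a genuinely different route than the paper's. The paper conditions on $B_i$ samples in the pair, writes $\pr(X_i=\ell)$ as $2^{-B_i}\binom{B_i}{\ell}\E[(1+x)^\ell(1-x)^{B_i-\ell}]$, Taylor-expands in $x$ to order $m$ so that moment matching cancels the low-order terms, and then spends most of its effort bounding the Taylor remainder pointwise in $\ell$ via Leibniz's rule and an identity for Stirling numbers of the first kind (restricting to $\ell$ near $B_i/2$ via Chernoff, and reducing the case $Q_{j_i}\neq Q_{k_i}$ to the symmetric case by splitting the heavier bin into sub-bins). You instead run an Ingster-style $\chi^2$ argument in $L^2(P_0)$: the exact identity $\E_{P_0}[L_FL_{F'}]=\E[(1+WW'/(q_0q_1))^t]$ turns moment matching into exact cancellation of all terms $k<m$ in the cross-moment expansion, leaving only a binomial tail $\sum_{k\ge m}\binom{t}{k}x_{max}^{2k}$. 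This is cleaner: it avoids the remainder analysis and the $m^4$ and near-$B_i/2$ bookkeeping entirely, handles $q_0\neq 1/2$ with no splitting trick, and in fact yields a somewhat stronger bound of the form $O(1/s)+s\,O(Bx_{max}^2/m)^{m/2}$, which (as you note) is subsumed by the claimed estimate once $\log s\gtrsim 1$ and $m\geq C\log s$. The only price is a Cauchy--Schwarz loss, which is immaterial here. One small remark: your case split on whether the stated bound is trivially $\geq 1$ can be avoided altogether, since $\sum_{k\geq m}a^k/k!\leq (a^m/m!)\,e^{a}$ unconditionally, and the resulting factor $e^{Bx_{max}^2}$ is exactly of the allowed form $(1+x_{max})^{O(x_{max}B)}$; with that substitution the argument is unconditional and complete.
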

In our applications, we will take $s$ on the order of $|S|$ and will use $Q$'s which are not too far from uniform (and thus $\min_{a \in S} Q_a$ will be on the order of $1/s$). In order to ensure that $D_{yes}$ and $D_{no}$ perturb $Q$ by at least $\eps$ in total variational distance, we will want $x_{max}$ (which is essentially the largest relative perturbation of any bin probability) to be on the order of $\eps$. Taking $N$ on the order of $s/\eps^2$ up to some polylog factors gives us $B$ on the order of $1/\eps^2$.

From here we note that the
$$
(1+x_{max})^{O(x_{max} B + \sqrt{B \log(s)})}
$$
term is
$$
\exp(O(B x_{max}^2 + \sqrt{B x_{max}^2 \log(s)})),
$$
which is not too large. On the other hand, so long as we ensure that
$$
(x_{max}\sqrt{B \log(s)}+B x_{max}^2)
$$
is less than a sufficiently small constant and keep $m$ to be a large enough multiple of $\log(s)$, this term will dominate the things it is multiplied by, thus leaving us with a final bound that is quite small.

In particular, we have
\begin{cor}\label{main bound cor}
In the notation of Proposition \ref{main bound prop}, if we have additionally that $B x_{max}^2$ is at most a sufficiently small multiple of $1/\log(s)$, $m$ is at least a sufficiently large multiple of $\log(s/(x_{max}\eps))$ and $s$ is at least a sufficiently large constant, then
$$
d_{TV}(D_{yes}^N , D_{no}^N ) < \frac{1}{100}.
$$
\end{cor}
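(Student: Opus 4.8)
The plan is to start from the bound in Proposition \ref{main bound prop} and simply verify that, under the stated hypotheses, each of the four factors in the product is either bounded by an absolute constant or is genuinely small, so that the whole product — and hence $\dtv(D_{yes}^N, D_{no}^N)$ — falls below $1/100$. Concretely, the bound to be controlled (after the $O(1/s)$ term, which is already below $1/200$ once $s$ exceeds a large enough constant) is
$$
m^4 s \cdot O(\sqrt{B\log s} + x_{max}B) \cdot (1+x_{max})^{O(\sqrt{B\log s}+x_{max}B)} \cdot O(\sqrt{x_{max}^2 B\log s} + x_{max}^2 B)^m.
$$

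First I would dispose of the exponential factor. As the excerpt already observes, $(1+x_{max})^{O(x_{max}B+\sqrt{B\log s})} = \exp(O(Bx_{max}^2 + \sqrt{Bx_{max}^2\log s}))$; since the hypothesis gives $Bx_{max}^2 \le c/\log s \le c$ for a small constant $c$, and hence $\sqrt{Bx_{max}^2\log s} \le \sqrt{c}$, this entire factor is at most $\exp(O(1)) = O(1)$. Next, the innermost base $O(\sqrt{x_{max}^2 B\log s} + x_{max}^2 B)$: using $x_{max}^2 B \le c/\log s$ we get $x_{max}^2 B\log s \le c$ and $x_{max}^2 B \le c/\log s \le c$, so this base is $O(\sqrt{c} + c)$, which for $c$ a sufficiently small constant is some fixed $\rho < 1$; raising it to the $m$-th power gives $\rho^m$, a quantity that decays exponentially in $m$.

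The crux is then to check that this exponential decay $\rho^m$ beats the polynomially-growing prefactor $m^4 s\cdot O(\sqrt{B\log s}+x_{max}B)$. Here I would bound $\sqrt{B\log s}+x_{max}B$: note $x_{max}B = \sqrt{x_{max}^2 B}\cdot\sqrt{B} \le \sqrt{c/\log s}\cdot\sqrt B$, and more usefully we can write everything in terms of $s$, $\eps$, and $x_{max}$ — recall from the setup that $B$ is at most a polynomial in $1/\eps$ (up to polylogs) and $s$, so $\sqrt{B\log s} + x_{max}B \le \mathrm{poly}(s, 1/\eps, 1/x_{max})$. Thus the prefactor $m^4 s\cdot O(\sqrt{B\log s}+x_{max}B) \cdot O(1)$ is bounded by $m^4 \cdot \mathrm{poly}(s/(x_{max}\eps))$. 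Since $\rho < 1$, we have $\rho^m = (s/(x_{max}\eps))^{-c' m / \log(s/(x_{max}\eps))}$ for a constant $c' = \log(1/\rho) > 0$; choosing $m$ to be a sufficiently large multiple of $\log(s/(x_{max}\eps))$ makes $\rho^m$ smaller than any fixed negative power of $s/(x_{max}\eps)$, in particular small enough to kill the polynomial prefactor and the $m^4$ factor (absorbing $m^4 = O(\log^4(s/(x_{max}\eps)))$ into a marginally larger power), leaving the product below $1/200$. Adding this to the $O(1/s) < 1/200$ term yields the claimed $\dtv(D_{yes}^N, D_{no}^N) < 1/100$.

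The main obstacle is purely bookkeeping: making sure the "sufficiently small constant" governing $Bx_{max}^2\log s$ and the "sufficiently large multiple" defining $m$ are chosen consistently and in the right order (first fix $c$ small enough that $\rho<1$ with a definite gap, then fix the multiple defining $m$ large enough relative to $1/\log(1/\rho)$ and the degree of the polynomial prefactor), and that the implicit dependence of $B$ on the parameters is genuinely at most polynomial in $s/(x_{max}\eps)$ so that "large multiple of $\log(s/(x_{max}\eps))$" really does suffice. No genuinely new idea is needed beyond Proposition \ref{main bound prop}.
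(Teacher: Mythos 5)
Your proposal is correct and follows essentially the same route as the paper's proof: bound the $(1+x_{max})^{O(\cdot)}$ factor by $\exp(O(Bx_{max}^2+\sqrt{Bx_{max}^2\log s}))=O(1)$, observe the base of the $m$-th power is a constant $\rho<1$ under the hypothesis on $Bx_{max}^2\log s$, and let $\rho^m$ with $m\gtrsim\log(s/(x_{max}\eps))$ absorb the prefactor $m^4 s\cdot O(\sqrt{B\log s}+x_{max}B)=m^4\cdot\mathrm{poly}(s/(x_{max}\eps))$. One small remark: you need not appeal to the informal "setup" to bound $B$ — the hypothesis $Bx_{max}^2\leq c/\log s$ already gives $\sqrt{B\log s}+x_{max}B=O(1/x_{max})$ directly, which is exactly how the paper extracts the $O(m^4 s/(Ax_{max}))$ prefactor.
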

\begin{proof}
Assume that for some $A$ sufficiently large that $B x_{max}^2 \log(s) < 1/A$ and that $m \geq A\log(s/(x_{max}\eps))$. Then the bound in Proposition \ref{main bound prop} reduces to
$$
O(1/s) + O(m^4 s / x_{max})(\sqrt{B x_{max}^2 \log(s)} + x_{max}^2 B) \exp(O(\sqrt{B x_{max}^2 \log(s)} + x_{max}^2 B)) O(\sqrt{B x_{max}^2 \log(s)} + x_{max}^2 B)^m.
$$
Noting that $(\sqrt{B x_{max}^2 \log(s)} + x_{max}^2 B) = O(1/A)$, this reduces to
$$
O(1/s) + O(m^4 s/ (A x_{max})) \exp(O(1/A)) O(1/A)^m.
$$
In particular, if $A$ is large enough the $O(1/A)^m$ term is at most $(1/2)^m$, which if $m$ is a sufficiently large multiple of $\log(s/(x_{max}\eps))$ is at most $m^{-4} x_{max}/s^2$, which would make our final bound $O(1/s)$. If $s$ is sufficiently large, this is less than $1/100.$
\end{proof}


\subsection{Comparison of Distributions of Number of Samples in Bins $j_i$ and $k_i$}

In our construction of $D_{yes}$ and $D_{no}$ we refer to the $i$th pair of bins as the pair $\{j_i,k_i\}$. As $D_{yes}$ and $D_{no}$ are essentially identical except in how they distribute probability mass between the $i$th pair of bins for various values of $i$, in order to show that they are hard to distinguish, it will be important for us to show that the distribution on the number of samples in these bins is close for $D_{yes}$ and $D_{no}$. In particular, if we condition on the number of samples $B_i$ that land in the $i$th pair of bins, and consider the probability that exactly $\ell_i$ samples lie in the first of this pair (i.e. $j_i$), we would like to show that this probability is similar for a random distribution from $D_{yes}$ and a random distribution from $D_{no}$. In particular, we prove:

\begin{lem}\label{single bin prob lem}
Let $A_i,g_i,m,Q,(j_i,k_i)$ be as in Proposition \ref{main bound prop}, and let $1\leq i \leq s$, and let $B_i$ and $N$ be non-negative integers. Then we have that if $N$ i.i.d. samples are taken from a probability distribution $p$ taken from either $D_{yes}$ or $D_{no}$, and consider this distribution conditioned on exactly $B_i$ samples lying in the $i$th pair of bins. Let $X_i^{yes}$ and $X_i^{no}$ be the distributions on the number of samples drawn from the bin $j_i$ in the case where $p$ is taken from $D_{yes}$ or $D_{no}$ respectively. Then
$$
\dtv(X_i^{yes},X_i^{no}) \leq O(1/s^2) + m^4 O(\sqrt{B_i \log(s)} + x_{max}B_i)(1+x_{max})^{O(\sqrt{B_i \log(s)} + x_{max}B_i)}O(\sqrt{x_{max}^2B_i \log(s)} + x_{max}^2 B_i)^m.
$$
\end{lem}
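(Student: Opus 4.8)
The plan is to reduce the problem to comparing two univariate distributions coming from $F_{yes}^{A_i,g_i,m}$ and $F_{no}^{A_i,g_i,m}$ and then to exploit the $m-1$ matching moments established in Lemma \ref{F moment matching lem}. Fix $i$ and condition on exactly $B_i$ samples landing in the $i$th pair of bins $\{j_i,k_i\}$. As noted in the remarks after Definition \ref{D yes no def}, conditioned on a sample landing in this pair, it lands in bin $j_i$ with probability $r(\delta_i) := \frac{Q_{j_i}+\delta_i}{Q_{j_i}+Q_{k_i}}$, a quantity depending only on $\delta_i$; and $\delta_i$ is drawn from $F_{yes/no}^{A_i,g_i,m}$. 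Hence, conditioned on $B_i$ and on $\delta_i$, the count $X_i$ of samples in $j_i$ is $\mathrm{Binomial}(B_i, r(\delta_i))$. Therefore $X_i^{yes}$ and $X_i^{no}$ are mixtures of the same family of binomials $\{\mathrm{Binomial}(B_i,r)\}$, mixed according to the pushforwards of $F_{yes}$ and $F_{no}$ under $r$. Since $r$ is an affine function of $\delta_i$, the pushforward distributions on $r$ also have $m-1$ matching moments: writing $r = \mu + \delta_i/(Q_{j_i}+Q_{k_i})$ with $\mu = Q_{j_i}/(Q_{j_i}+Q_{k_i})$, the argument of Lemma \ref{F moment matching lem} applies verbatim.

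The core estimate is then a bound on the total variation distance between two binomial mixtures whose mixing measures agree on their first $m-1$ moments. I would carry this out via the standard ``polynomial'' device: the probability mass function $\mathrm{Pr}[\mathrm{Binomial}(B_i,r)=\ell] = \binom{B_i}{\ell} r^\ell (1-r)^{B_i-\ell}$ is a polynomial in $r$ of degree $B_i$, and the difference of the two mixture pmf's at $\ell$ is $\binom{B_i}{\ell}\,\mathbb{E}[\,r^\ell(1-r)^{B_i-\ell}\,]$ with the difference of mixing measures. Expanding $r^\ell(1-r)^{B_i-\ell}$ around the common mean, the low-order terms (degree $\le m-1$) cancel, so only the tail contributes. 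The cleanest way to package this is to write $r = \mu + \sigma Z$ where $Z$ is the rescaled perturbation (so $|\sigma Z| \le x_{max}$ roughly, by the bound $|A_i|(1+|g_i|) \le x_{max}\min(Q_{j_i},Q_{k_i})$), Taylor-expand the pmf in $\sigma Z$, and use the matching moments to kill terms up to order $m-1$. The remaining error is controlled by bounding the $m$-th and higher derivatives of $r \mapsto \binom{B_i}{\ell} r^\ell(1-r)^{B_i-\ell}$, summed over $\ell$; here the relevant scale is that a $\mathrm{Binomial}(B_i,\mu)$ has standard deviation $\sqrt{B_i \mu(1-\mu)} = O(\sqrt{B_i})$ and the perturbation moves $r$ by $O(x_{max})$, which after multiplying through by $B_i$ explains the appearance of $\sqrt{B_i \log(s)} + x_{max}B_i$ as the effective ``number of relevant bins'' and $\sqrt{x_{max}^2 B_i \log(s)} + x_{max}^2 B_i$ as the per-order decay factor; the $(1+x_{max})^{\cdots}$ factor comes from crudely bounding $(1-r)^{B_i-\ell}$-type terms where $r$ can be as large as $\mu(1+x_{max})$. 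The $m^4$ prefactor and the $O(1/s^2)$ additive term come from truncating the Taylor expansion and from a union bound / tail-truncation on how far from $\mu$ the value $r$ (equivalently $\delta_i$, equivalently the empirical fraction $\ell/B_i$) can stray before the estimates degrade — essentially we only need the comparison to hold on the ``typical'' range $|\ell - B_i\mu| = O(\sqrt{B_i \log s})$, and outside that range both tails are $O(1/s^2)$ by a Chernoff bound, using $N > 6\log(s)/\min_i(Q_{j_i}+Q_{k_i})$ to guarantee $B_i$ is not pathologically small.

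The main obstacle I anticipate is making the moment-cancellation argument quantitative in the right way: it is not enough to know that the degree-$<m$ terms vanish; one must show that the degree-$\ge m$ remainder is genuinely of the claimed size, which requires good control on the higher derivatives of the binomial pmf (equivalently, on the coefficients when $r^\ell(1-r)^{B_i-\ell}$ is re-expanded in powers of the perturbation) and on how the $\ell^1$ norm over $\ell$ of these quantities behaves. A convenient route is to use a generating-function / characteristic-function representation: the pmf of $\mathrm{Binomial}(B_i, \mu+\sigma Z)$ has an integral representation in which $\sigma Z$ enters through $e^{i\theta}$, so $\mathbb{E}_Z$ of it is an explicit contour integral of $(1 - \mu - \sigma Z + (\mu+\sigma Z)e^{i\theta})^{B_i}$, and matching moments lets us replace $\mathbb{E}_Z[(\cdots)^{B_i}]$ by a Taylor remainder of order $m$; bounding that remainder on $|\theta| \le \pi$ and integrating gives the stated form. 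I would set up exactly this representation, extract the $m$-th order remainder using Lemma \ref{F moment matching lem}, and then bound it by the product of a combinatorial factor (the $m^4 s$, from the number of pairs and the polynomial overhead of the Taylor bound), a ``growth'' factor $(1+x_{max})^{O(\sqrt{B_i\log s}+x_{max}B_i)}$, and the decaying $m$-th power; finally, Lemma \ref{single bin prob lem} is the per-$i$ version, and summing over the $s$ pairs (done in the proof of Proposition \ref{main bound prop}, not here) multiplies by $s$ and replaces $B_i$ by its worst-case value in $B$.
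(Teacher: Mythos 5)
Your overall architecture matches the paper's: condition on $B_i$, observe that $X_i^{yes/no}$ are mixtures of $\mathrm{Bin}(B_i, r(\delta_i))$ over $\delta_i \sim F_{yes/no}^{A_i,g_i,m}$, Taylor-expand the pmf in the perturbation so that the degree-$<m$ terms cancel by Lemma \ref{F moment matching lem}, restrict to $\ell$ in the typical window $|\ell - B_i\mu| = O(\sqrt{B_i\log s} + x_{max}B_i)$ via a Chernoff bound (contributing the $O(1/s^2)$ and the window-size prefactor), and bound the order-$m$ remainder pointwise. The paper additionally first reduces to the symmetric case $Q_{j_i}=Q_{k_i}$ by splitting the heavier bin into sub-bins; your formulation around a general mean $\mu$ avoids this, which is fine. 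One small correction: the Chernoff step does not need the hypothesis on $N$; rather, when $B_i < m$ the pmf is a polynomial of degree $<m$ in $\delta_i$ and the difference vanishes exactly, and when $B_i \geq m \geq C\log s$ the concentration at scale $\sqrt{B_i\log s}$ gives the $1/s^2$ tail.

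The genuine gap is that the entire technical content of the lemma lives in the remainder bound, and you have only sketched it. The paper's Lemma \ref{remainder bound lem} is the hard part: it bounds $|f^{(m)}(y)|$ for $f(x)=(1+x)^{\ell}(1-x)^{B_i-\ell}$ by writing the $m$-th derivative via Leibniz's rule, converting the falling factorials $(\ell)_t$ and $(B_i-\ell)_{m-t}$ into powers using unsigned Stirling numbers of the first kind (via the identity $\stirling{t}{t-h}=\sum_{f=h+1}^{2h}(t)_f c_{f,h}$ with $0\le c_{f,h}\le 1$), and then recombining with the binomial theorem so that the dominant quantity is $\max\bigl(|\tfrac{\ell}{1+y}-\tfrac{B_i-\ell}{1-y}|, \sqrt{\ell},\sqrt{B_i-\ell}\bigr)^m$; this is exactly what produces the $m^4$ overhead, the $(1+x_{max})^{|B_i-2\ell|}$ growth factor, and the $(|x|\sqrt{B_i}+|x||B_i-2\ell|+|x|^2B_i)^m$ decay. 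Your proposed contour-integral alternative — writing the pmf as $\frac{1}{2\pi}\int e^{-i\ell\theta}(1-r+re^{i\theta})^{B_i}\,d\theta$, expanding in the perturbation, and using $|1-\mu+\mu e^{i\theta}|\le 1$ — is a plausible and arguably cleaner route to an estimate of the same shape (and would sidestep the Stirling-number bookkeeping and possibly the $(1+x_{max})^{\cdots}$ factor), but as written it is a plan, not a proof: you would still need to (i) sum the degree-$\ge m$ terms of the binomial expansion and integrate in $\theta$ to verify the per-order decay factor really is $O(\sqrt{x_{max}^2B_i\log s}+x_{max}^2B_i)$, and (ii) convert a bound on $\sup_\ell$ or on the $L^1$ norm in $\theta$ into the required $\ell^1$ bound over $\ell$, which is where the truncation to the typical window must be invoked. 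Until one of these remainder arguments is carried out in full, the lemma is not established.
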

\begin{proof}
We begin by proving this in the case where $Q_{j_i} = Q_{k_i}$ and will reduce to this case later.

The key observation here is that if we let $x = \frac{\delta_i}{Q_{j_i}}$ then a sample landing in the $i$th pair of bins will have a probability of
$$
\frac{p_{j_i}}{p_{j_i}+p_{k_i}} = \frac{Q_{j_i}+\delta_i}{Q_{j_i}+Q_{k_i}} = \frac{1+x}{2}
$$
of landing in bin $j_i$. Thus, conditioned on $\delta_i$, $X_i^{yes/no}$ is distributed as the binomial distribution $\mathrm{Bin}(B_i,(1+x)/2)$. Therefore, the probabilities of $X_i^{yes}$ and $X_i^{no}$ being equal to $\ell$ are just
$$
\E_{x \sim F^{A_i,g_i,m}_{yes}/Q_{j_i}}[\pr(\mathrm{Bin}(B_i,(1+x)/2) = \ell)] = 2^{-B_i}\binom{B_i}{\ell}\E_{x \sim F^{A_i,g_i,m}_{yes}/Q_{j_i}}\left[(1+x)^\ell (1-x)^{B_i-\ell}\right]
$$
and
$$
2^{-B_i}\binom{B_i}{\ell}\E_{x \sim F^{A_i,g_i,m}_{no}/Q_{j_i}}\left[(1+x)^\ell (1-x)^{B_i-\ell}\right],
$$
respectively.

Our goal will be to show that (at least for $\ell$ close to $B_i/2$, which it will be with high probability) that these are close. The basic plan here is to approximate the term $(1+x)^\ell (1-x)^{B_i-\ell}$ by its Taylor series about $x=0$. We note that since the low order moments of $x\sim F^{A_i,g_i,m}_{yes}/Q_{j_i}$ and $x\sim F^{A_i,g_i,m}_{no}/Q_{j_i}$ are identical, these terms will cancel exactly, leaving only the Taylor error terms to contend with, which we will prove are small.

However, before we do this, we first want to deal with the outer term $2^{-B_i}\binom{B_i}{\ell}$. In particular, we show that it is at most $1$. In fact,
$$\binom{B_i}{\ell} 2^{-B_i}< 2^{-B_i}\sum_{\ell} \binom{B_i}{\ell}1^{\ell}1^{B_i-\ell}=1$$
by the Binomial Theorem.

We next let $f(x) := (1+x)^{\ell} (1-x)^{B_i-\ell}$. By Taylor expanding about $x=0$, we find that
\begin{equation}
\label{eqn1}
f(x) = f(0) + f'(0) x + \frac{f''(0)}{2!} x^2 \dots + \frac{f^{(m-1)}(0)}{(m-1)!}x^{m-1} + R_m(x)
\end{equation}
where $R_m(x) = \frac{f^m (\zeta)}{m!}x^m$ for some $\zeta$ between 0 and $x$.

As $E_{\delta_i \in F_{yes}}[\delta_i^k]=E_{\delta_i \in F_{no}}[\delta_i^k]$ for $k<m$ by Lemma \ref{F moment matching lem}, we have $E_{x \sim F^{A_i,g_i,m}_{yes}/Q_{j_i}} \left[ f(x_i) \right]=E_{x \sim F^{A_i,g_i,m}_{no}/Q_{j_i}} \left[ f(x_i) \right]$ if $B_i<m$. On the other hand, if $B_i \geq m$, the expectations of the non-remainder terms over $x\sim F^{A_i,g_i,m}_{yes}/Q_{j_i}$ and $x\sim  F^{A_i,g_i,m}_{no}/Q_{j_i}$ will be the same. Thus, in that case we have that
\begin{align*}
E_{x \sim F^{A_i,g_i,m}_{yes}/Q_{j_i}}  \left[ f(x) \right] &- E_{x \sim F^{A_i,g_i,m}_{no}/Q_{j_i}} \left[ f(x) \right]
=E_{x \sim F^{A_i,g_i,m}_{yes}/Q_{j_i}} R_m(x) - E_{x \sim F^{A_i,g_i,m}_{yes}/Q_{j_i}} R_m(x).
\end{align*}
We will try to bound this by showing that $|R_m(x)|$ is small at least when $\ell$ is close to $B_i/2$. [[TODO: Mention bounds on $\ell$.]]

\begin{lem}\label{remainder bound lem}
Take $x$ to be a real number with $|x|<\frac{1}{10}$, if $m \geq B_i$ and $\left|\ell - \frac{B_i}{2}\right| \leq \frac{B_i}{5}$, then $$|R_m(x)| \leq (1+|x|)^{|B_i-2\ell|}m^4(2(|x|\sqrt{B_i}+|x||B_i-2\ell| + |x|^2B_i))^m$$ where $R_m(x)$ is given in equation \eqref{eqn1}.
\end{lem}
\begin{proof}
By definition $R_m(x) = x^m/m! f^{(m)}(y)$ for some $y$ between $0$ and $x$. In particular, note that this implies $|y|<1/10$.

Using Leibnitz Rule, the $m$-th derivative of $f(y)$ can be expressed as
\begin{align}
f^{(m)}(y) &= \sum_{t=0}^{m} \binom{m}{t}(B_i-\ell)_{m-t} (\ell)_t (1+y)^{\ell-t} (1-y)^{B_i-\ell-m+t} (-1)^{m-t} \notag \\
\label{equ3}
& = (1+y)^{\ell} (1-y)^{B_i-\ell} \left(\sum_{t=0}^{m} \binom{m}{t} (B_i-\ell)_{m-t} (\ell)_t (\frac{1}{1+y})^t (\frac{1}{1-y})^{m-t} (-1)^{m-t} \right).
\end{align}
Note that the summand in \eqref{equ3} is roughly
$$
\binom{m}{t} (B_i-\ell)^{m-t} \ell^t (\ell)_t (\frac{1}{1+y})^t (\frac{1}{1-y})^{m-t} (-1)^{m-t}.
$$
If this were exactly true, we could use the binomial theorem to rewrite it as
$$
\left( \frac{\ell}{1+y} - \frac{B_i-\ell}{1-y} \right)^m,
$$
allowing us to take advantage of significant cancellation of terms. Unfortunately, the falling factorials $(B_i-\ell)_{m-t}$ and $ (\ell)_t$ are not exactly equal to the relevant exponentials. However, we can make use of Stirling numbers to write them in terms of similar exponentials.
\begin{defn}
The unsigned Stirling number of the first kind, usually written as $c(n,k)$ or $\stirling{n}{k}$, is defined to be the number of permutations of $[n]$ with exactly $k$ cycles. The signed Stirling number of the first kind is defined by $s(n,k)=(-1)^{n-k} c(n,k)$.
\end{defn}
In particular, we make use of the fact:
\begin{fact}
For any non-negative integer $n$ and real number $z$ we have that
$$(z)_n = \sum_{k}s(n,k)z^k.$$
\end{fact}
Using this, we may rewrite \eqref{equ3} as
\begin{align*}
&(1+y)^{\ell} (1-y)^{B_i-\ell}\cdot \\
& \sum_{t=0}^{m} \binom{m}{t} \left(\sum_{h'} (B_i-\ell)^{m-t-h'} (-1)^{h'} \stirling{m-t}{m-t-h'}\right) \left(\sum_{h} (\ell)^{h-s} (-1)^{h} \stirling{t}{t-h}\right) \left(\frac{1}{1+y}\right)^t \left(\frac{-1}{1-y}\right)^{m-t}
\end{align*}
Interchanging the order of summations yields
\begin{align}
\label{equ2}
&(1+y)^{\ell} (1-y)^{B_i-\ell}\cdot \\
& \sum_{h,h'} \left(\sum_{t=0}^{m} \binom{m}{t} \stirling{t}{t-h} \stirling{m-t}{m-t-h'}(-1)^{h+h'} (\ell)^{t-h} (B_i-\ell)^{m-t-h'} \left(\frac{1}{1+y}\right)^t \left(\frac{-1}{1-y}\right)^{m-t}\right).\notag
\end{align}
To make further progress, we would like to simplify $\binom{m}{t}\stirling{t}{t-h} \stirling{m-t}{m-t-h'}$. Specifically, we use a lemma about Stirling numbers to find an alternative expression of $\stirling{t}{t-h}$ and $ \stirling{m-t}{m-t-h'}$ and get cancellation of the binomial coefficients.
\begin{lem}\label{stirling equation lem}
There exist some constants $0 \leq c_{f, h} \leq 1$ such that for all $t,h$, $\stirling{t}{t-h} = \sum_{f=h+1}^{2h} (t)_f c_{f, h}$
\end{lem}
\begin{proof}
We analyze these Stirling numbers using a combinatorial approach. $\stirling{t}{t-h}$ is the number of permutations of $[t]$ with exactly $t-h$ cycles. Such permutations should have between $h+1$ and $2h$ non fixed points. Let $f$ be the number of non-fixed points in this permutation. Then $\stirling{t}{t-h}$ can be represented as $\sum_{f=h+1}^{2h} \binom{t}{f} T_{f, f-h}$ where $T_{f, f-h}$ represents the number of permutations of $f$ elements with no fixed points that have exactly $f-h$ cycles. Note that $T_{f, f-h} \leq f!$.

We have $\stirling{t}{t-h} = \sum_{f=h+1}^{2h} \frac{t!}{f!(t-f)!} T_{f,f-h}=\sum_{f=h+1}^{2h} (t)_f \frac{T_{f, f-h}}{f!}.$ So taking $c_{f, h} = \frac{T_{f, f-h}}{f!}$, we are done.
\end{proof}
Substituting the result of Lemma \ref{stirling equation lem} into Stirling numbers showing up in \eqref{equ2}, we get that
$$\binom{m}{t}\stirling{t}{t-h} \stirling{m-t}{m-t-h'}=\frac{m !}{t!(m-t)!}\sum_{f=h+1}^{2h} \frac{t!}{(t-f)!} c_{f,h} \sum\limits_{g=h'+1}^{2h'} \frac{(m-t)!}{(m-t-g)!} c_{g,h'}$$

$$=\sum_{f=h+1}^{2h} \sum_{g=h'+1}^{2h'} \frac{(m-g-f)!}{(t-f)!(m-t-g)!}(m)_{f+g}c_{f,h}c_{g,h'}=\sum_{f=h+1}^{2h} \sum_{g=h'+1}^{2h'}\binom{m-g-f}{t-f}(m)_{f+g}c_{f,h}c_{g,h'}.$$
Substituting this result into equation \eqref{equ2}, we have $f^{(m)}(y)$ equals
\begin{align*}
& (1+y)^{\ell} (1-y)^{B_i-\ell}\cdot\\ & \sum_{t=0}^{m}\sum_{h,h'} \sum_{f=h+1}^{2h} \sum_{g=h'+1}^{2h'}\binom{m-g-f}{t-f} (-1)^{h+h'+g} \left(\frac{\ell}{1+y}\right)^{t-f} \left(-\frac{B_i-\ell}{1-y}\right)^{m-t-g} \frac{\ell^{f-h}(B_i-\ell)^{g-h'}}{(1+y)^f(1-y)^g}(m)_{f+g}c_{f,h}c_{g,h'}.
\end{align*}
Applying the binomial theorem to the sum $\sum_{t=0}^{m}\binom{m-g-f}{t-f} \left(\frac{\ell}{1+y}\right)^{t-f} \left(-\frac{B_i-\ell}{1-y}\right)^{m-t-g}$, we can get that the above is equal to
$$
(1+y)^{\ell} (1-y)^{B_i-\ell} \cdot
\sum_{h,h'} \sum_{f=h+1}^{2h} \sum_{g=h'+1}^{2h'}(-1)^{h+h'+g}\left(\frac{\ell}{1+y} - \frac{B_i-\ell}{1-y}\right)^{m-g-f}\frac{\ell^{f-h}(B_i-\ell)^{g-h'}}{(1+y)^f(1-y)^g}(m)_{f+g}c_{f,h}c_{g,h'}.
$$
Given $0<c_{f,h},c_{g,h'}<1$, we have that $|f^{(m)}(y)|$ is at most
\begin{equation*}
(1+y)^{\ell} (1-y)^{B_i-\ell} \sum_{h,h'}\sum_{f=h+1}^{2h} \sum_{g=h'+1}^{2h'}\left|\frac{\ell}{1+y} - \frac{B_i-\ell}{1-y}\right|^{m-g-f}\frac{\ell^{f-h}(B_i-\ell)^{g-h'}}{(1+y)^f(1-y)^g}(m)_{f+g}.
\end{equation*}
Note that $(m)_{f+g}$ is at most $m!$, but vanishes if $f+g>m$. In order for this not to happen, it must be the case that $h+h' < m$. This means that there are at most $m^4$ non-vanishing terms in the above sum as each of $h$ and $h'$ can take at most $m$ values and for each pair of values, there are at most $m$ possibilities for each of $f$ and $g$. Therefore, we have that the above is at most
\begin{equation}\label{find exponents equation}
[(1+y)^{\ell} (1-y)^{B_i-\ell}] m^4 m! \max_{\substack{2h \geq f > h \geq 0\\ 2h' \geq g > h' \geq 0\\ f+g \leq m}} \left[\left|\frac{\ell}{1+y} - \frac{B_i-\ell}{1-y}\right|^{m-g-f}\frac{\ell^{f-h}(B_i-\ell)^{g-h'}}{(1+y)^f(1-y)^g}\right].
\end{equation}
In order to bound \eqref{find exponents equation}, we want to find the largest summands. For this we note that increasing $f$ or $g$ decreases the exponent of $\left|\frac{\ell}{1+y} - \frac{B_i-\ell}{1-y}\right|$ while increasing $f$ increases the exponent of $\left(\frac{\ell}{1+y}\right)$ and increasing $g$ increases the exponent of $\left(\frac{B_i-\ell}{1-y}\right)$. To make progress, we need to understand the relative sizes of these terms:
\begin{claim}
We have that $|\frac{\ell}{1+y} - \frac{B_i-\ell}{1-y}| \leq \frac{\ell}{1+y}$ and $|\frac{\ell}{1+y} - \frac{B_i-\ell}{1-y}| \leq \frac{B_i-\ell}{1-y}$.
\end{claim}
\begin{proof}
It suffices to show that $\frac{\ell}{1+y}$ and $\frac{B_i-\ell}{1-y}$ are within a factor of two of each other. For this, we note that as $|y| < 1/10$ that the ratio of $1+y$ to $1-y$ is between $9/11$ and $11/9$. Furthermore, as $|B_i-2\ell| < B_i/5$, we have that
the ratio of $B_i-\ell$ to $\ell$ is the same as the ratio of $(\ell/B_i) + (B_i-2\ell)/B_i$ to $(\ell/B_i)$, which is between $4/5$ and $6/5$. Multiplying these together yields our result.
\end{proof}
Applying this, we find that the maximum in \eqref{find exponents equation} is attained either when $f+g=m$ or when $f=2h$ and $g=2h'$. In the former case we have

\begin{align*}
& \max_{\substack{2h \geq f > h \geq 0\\ 2h' \geq g > h' \geq 0\\ f+g = m}} \frac{\ell^{f-h}(B_i-\ell)^{g-h'}}{(1+y)^f(1-y)^g}\\
= & \max_{\substack{2h \geq f > h \geq 0\\ 2h' \geq g > h' \geq 0\\ f+g = m}} \left(\frac{\ell^{f-h}}{(1+y)^f} \right) \left(\frac{(B_i-\ell)^{g-h'}}{(1-y)^g} \right)\\
\leq & \max_{f+g = m}\left(\frac{\ell^{f/2}}{(1+y)^f} \right) \left(\frac{(B_i-\ell)^{g/2}}{(1-y)^g} \right)\\
= & \max\left(\frac{\sqrt{\ell}}{1+y}, \frac{\sqrt{B_i-\ell}}{1-y} \right)^m.
\end{align*}
In the latter case, it gives
\begin{align*}
& \max_{\substack{2h+2h' \leq m}} \left|\frac{\ell}{1+y} - \frac{B_i-\ell}{1-y}\right|^{m-2h-2h'}\frac{\ell^{h}(B_i-\ell)^{h'}}{(1+y)^{2h}(1-y)^{2h'}}\\
\leq & \max\left( \left|\frac{\ell}{1+y} - \frac{B_i-\ell}{1-y}\right|,\frac{\sqrt{\ell}}{1+y}, \frac{\sqrt{B_i-\ell}}{1-y} \right)^m.
\end{align*}

Thus, in either case we have that $|f^{(m)}(x)|$ is at most
$$
[(1+y)^{\ell} (1-y)^{B_i-\ell}] m^4 m! \max\left( \left|\frac{\ell}{1+y} - \frac{B_i-\ell}{1-y}\right|,\frac{\sqrt{\ell}}{1+y}, \frac{\sqrt{B_i-\ell}}{1-y} \right)^m.
$$
To bound the maximum, we note that
$$
\frac{\sqrt{\ell}}{1+y}, \frac{\sqrt{B_i-\ell}}{1-y} \leq \frac{\sqrt{B_i}}{9/10} \leq 2\sqrt{B_i}.
$$
On the other hand,
\begin{align*}
\left|\frac{\ell}{1+y} - \frac{B_i-\ell}{1-y}\right| & = \frac{1}{1-y^2}\left|(B_i-2\ell)-yB_i \right|\\
& \leq 2(|B_i-2\ell| + |x|B_i).
\end{align*}

Finally, note that $(1+y)(1-y) = 1-y^2 < 1.$ Therefore, we have that
$$
[(1+y)^{\ell} (1-y)^{B_i-\ell}] \leq \max((1+y),(1-y))^{|(B_i-\ell)-\ell|} = (1+|x|)^{|B_i-2\ell|}.
$$

Putting this together we have that for $|x|<1/10$ that
$$
R_m(x) = |x^m f^{(m)}(y) / m!| \leq (1+|x|)^{|B_i-2\ell|}m^4(2(|x|\sqrt{B_i}+|x||B_i-2\ell| + |x|^2B_i))^m.
$$
As desired.

\end{proof}

So for any value of $\ell$ we have that
\begin{align*}
& \left|\pr(X_i^{yes} = \ell) - \pr(X_i^{no} = \ell) \right|\\
& \leq \left|\E_{x\sim F^{A_i,g_i,m}_{yes}/Q_{j_i}}[f(x)] - \E_{x\sim F^{A_i,g_i,m}_{no}/Q_{j_i}}[f(x)] \right|\\
& = \left|\E_{x\sim F^{A_i,g_i,m}_{yes}/Q_{j_i}}[a_0 + a_1 x +\ldots +a_{m-1}x^{m-1} + R_m(x)] - \E_{x\sim F^{A_i,g_i,m}_{no}/Q_{j_i}}[a_0 + a_1 x +\ldots +a_{m-1}x^{m-1} + R_m(x)] \right|\\
& = \left|\sum_{k=0}^{m-1}\left(\E_{x\sim F^{A_i,g_i,m}_{yes}/Q_{j_i}}[a_kx^k]- \E_{x\sim F^{A_i,g_i,m}_{no}/Q_{j_i}}[a_kx^k]\right) + \E_{x\sim F^{A_i,g_i,m}_{yes}/Q_{j_i}}[R_m(x)]- \E_{x\sim F^{A_i,g_i,m}_{no}/Q_{j_i}}[R_m(x)] \right|\\
& = \left|\E_{x\sim F^{A_i,g_i,m}_{yes}/Q_{j_i}}[R_m(x)]- \E_{x\sim F^{A_i,g_i,m}_{no}/Q_{j_i}}[R_m(x)] \right|.
\end{align*}
Since for all $x$ in either distribution, we have $|x|<1/10$, this is at most
$$
2(1+|x|)^{|B_i-2\ell|}m^4(2(|x|\sqrt{B_i}+|x||B_i-2\ell| + |x|^2B_i))^m
$$
by Lemma \ref{remainder bound lem}.

While this bound is fairly good when $\ell$ is close to $B_i/2$, it is less useful when they are far apart. Fortunately, since $B_i \geq m > C \log(s)$ we have by a Chernoff bound that except for with probability $1/s^2$ that $\mathrm{Bin}(B_i,(1+x)/2)$ is within $O(\sqrt{B_i \log(s)})$ of $B_i(1+x)/2$. Therefore, for a sufficiently large constant $A$,
$$
\pr(|2X_i^{yes}-B_i| > x_{max}B_i + A \sqrt{B_i \log(s)} ) < 1/s^2
$$
and similarly for $X_i^{no}$. Therefore, we have that $\dtv(X_i^{yes},X_i^{no})$ is at most
$$
O(1/s^2)+ \sum_{\ell: |B_i-2\ell| < x_{max}B_i + A \sqrt{B_i \log(s)}}\left|\pr(X_i^{yes} = \ell) - \pr(X_i^{no} = \ell) \right|.
$$
Using the above to bound the differences in probabilities, we get a final bound of:
$$
O(1/s^2) + m^4 O(\sqrt{B_i \log(s)} + x_{max}B_i)(1+x_{max})^{O(\sqrt{B_i \log(s)} + x_{max}B_i)}O(\sqrt{x_{max}^2B_i \log(s)} + x_{max}^2 B_i)^m.
$$

This completes our proof when $Q_{j_i} = Q_{k_i}$. In general, we can assume without loss of generality that $Q_{k_i} \geq Q_{j_i}$. We will then sub-divide the bin $k_i$ into two sub-bins with probability masses $Q_{k_i}-Q_{j_i}$ and $Q_{j_i}-\delta_i$. We can think of taking a sample from $p$ conditioned on lying in $\{j_i,k_i\}$ as first with probability $(Q_{k_i}-Q_{j_i})/(Q_{k_i}+Q_{j_i})$ taking a sample from the first sub-bin (and thus landing in bin $k_i$), and otherwise taking a sample from $j_i$ or $k_i$ with probabilities $(Q_{j_i} \pm \delta_i)/(2Q_{j_i})$. If we are taking $B_i$ samples from this pair, we can imagine this as first taking $X \sim \mathrm{Bin}(B_i,(Q_{k_i}-Q_{j_i})/(Q_{k_i}+Q_{j_i}))$ samples from this extra sub-bin and then taking $B_i' = B_i - X$ samples from the remaining pair. However, we note that the distribution of samples obtained in the first bin of this pair is distributed exactly as it would have been if $B_i'$ samples were originally taken conditioned on lying in a pair of bins with probabilities $Q_{j_i} \pm \delta$. As this situation has already been analyzed (in the case where $Q_{j_i} = Q_{k_i}$), we know that the resulting total variational distance is at most
$$
O(1/s^2) + m^4 O(\sqrt{B_i' \log(s)} + x_{max}B_i')(1+x_{max})^{O(\sqrt{B_i' \log(s)} + x_{max}B_i')}O(\sqrt{x_{max}^2B_i' \log(s)} + x_{max}^2 B_i')^m.
$$
Taking the expectation of this over $B_i'$ (which is always less than $B_i$) yields our full result.
\end{proof}

Lemma \ref{single bin prob lem} provides fairly good bounds so long as $B_i$ is not too large. However, the total number of samples $N$ that we are taking might be substantially larger. Fortunately, we can say that with high probability that no pair of bins contains too many samples. In particular we show:
\begin{lem}\label{Bi bound lem}
If $N>\frac{6\log s}{\min_{i} (Q_{j_i}+Q_{k_i})}$ and $B = 2\max_{i}(Q_{j_i}+Q_{k_i}) N$, then if $N$ i.i.d. samples are drawn from a distribution $p$ taken from either $D_{yes}$ or $D_{no}$, then with probability at least $1-1/s$ there is no pair of bins $(j_i,k_i)$ receiving a total of more than $B$ of these samples.
\end{lem}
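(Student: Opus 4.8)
The plan is to observe that the number of samples landing in the $i$th pair of bins is \emph{exactly} binomially distributed, with a mean we can control, and then to combine a multiplicative Chernoff bound with a union bound over the $s$ pairs.

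First I would record that for a distribution $p$ drawn from either $D_{yes}$ or $D_{no}$, the total probability mass assigned to the pair $\{j_i,k_i\}$ is $p_{j_i}+p_{k_i}=(Q_{j_i}+\delta_i)+(Q_{k_i}-\delta_i)=Q_{j_i}+Q_{k_i}$, independent of the perturbation $\delta_i$ and hence of which ensemble $p$ came from. Consequently the number $B_i$ of the $N$ i.i.d. samples that land in this pair is distributed exactly as $\mathrm{Bin}(N,Q_{j_i}+Q_{k_i})$, with mean $\mu_i := N(Q_{j_i}+Q_{k_i})$. By the definition $B = 2\max_i(Q_{j_i}+Q_{k_i})N$ we have $B \ge 2\mu_i$ for every $i$, so the bad event $\{B_i > B\}$ is contained in $\{B_i > 2\mu_i\}$.

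Next I would apply the multiplicative Chernoff bound in the form $\pr[\mathrm{Bin}(N,q) \ge 2Nq] \le e^{-Nq/3}$, which gives $\pr[B_i > B] \le \pr[B_i \ge 2\mu_i] \le e^{-\mu_i/3}$. Since $\mu_i = N(Q_{j_i}+Q_{k_i}) \ge N\min_{i}(Q_{j_i}+Q_{k_i}) > 6\log s$ by the hypothesis $N > \tfrac{6\log s}{\min_i(Q_{j_i}+Q_{k_i})}$, each of these probabilities is at most $e^{-2\log s} = s^{-2}$. A union bound over the $s$ pairs then yields $\pr[\exists\, i : B_i > B] \le s \cdot s^{-2} = 1/s$, which is exactly the claim.

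I do not anticipate any real obstacle here: the only point that needs to be stated carefully is that the pair-occupancy counts $B_i$ are genuinely $\mathrm{Bin}(N,Q_{j_i}+Q_{k_i})$ regardless of the $\delta_i$'s (so the argument is identical for $D_{yes}$ and $D_{no}$), and after that the estimate is a routine Chernoff-plus-union-bound computation.
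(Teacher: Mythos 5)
Your proposal is correct and follows essentially the same argument as the paper: both observe that $B_i \sim \mathrm{Bin}(N, Q_{j_i}+Q_{k_i})$ with mean $\mu_i > 6\log s$, apply the multiplicative Chernoff bound with deviation factor $2$ to get $\pr[B_i \geq 2\mu_i] \leq e^{-\mu_i/3} \leq s^{-2}$, and finish with a union bound over the $s$ pairs. Your explicit remark that $p_{j_i}+p_{k_i}=Q_{j_i}+Q_{k_i}$ independently of $\delta_i$ is a nice touch but does not change the substance.
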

\begin{proof}
By construction, for $D_{yes/no}^N$, $\mu_i := \E \left[ B_i \right] = (Q_{j_i}+Q_{k_i}) N > 6\log s$. Note that $B_i$ is a sum of independent and identically distributed indicator random variables. Applying the Chernoff Bounds,  $\pr(B_i \geq (1+\delta)\mu_i) \leq e^{-\frac{\delta^2 \mu_i}{3}}$. Letting $\delta = 1$, then we have that $\pr(B_i \geq 2\mu_i ) \leq \frac{1}{s^2}$. As $B \geq 2\mu_i$, this says that with probability at least $1-1/s^2$ that the $i$th pair of bins does not contain more than $B$ samples. Our result now follows by taking a union bound over $i$.
\end{proof}

\subsection{Proof of Proposition \ref{main bound prop}}

We are now ready to prove the full Proposition \ref{main bound prop}.

\begin{proof}
Let $B_i$ be the number of samples from the $i$th pair of bins, ${j_i}$ and ${k_i}$. Define $U$ to be the vector of values $(B_1, \cdots , B_s)$ as well as the number of samples in each unpaired bin. By our construction, the distribution of $U$ is the same for any $p$ taken from $D_{yes}$ or $D_{no}$, independently of $\delta_i$s, as $p_{j_i}+p_{k_i}$ is always $Q_{j_i}+Q_{k_i}$. So
\begin{equation}
\label{equ4}
d_{TV}(D_{yes}^N, D_{no}^N) = E_U[ d_{TV}(D_{yes}^N | U, D_{no}^N | U) ] \leq \pr_U(\exists i:B_i \geq B)+\max_{U:B_i<B\textrm{ for all }i} d_{TV}(D_{yes}^N | U, D_{no}^N | U).
\end{equation}
For the first term, we note that Lemma \ref{Bi bound lem} implies that the probability that some $B_i$ is more than $B$ is at most $1/s$. To deal with the second term, we note that after conditioning on $U$, either $D_{yes}^N$ or $D_{no}^N$, we observe that the choice of $\delta_i$s are independent for each pair of bins. This implied that conditioned on $U$, the number of samples drawn from each of $j_i$ and $k_i$ are independent for each $i$. As the distributions $D_{yes}^N$ and $D_{no}^N$ are symmetric in the sense that seeing a collection of samples in some order is as likely as seeing those samples in any other order, the total variational distance between these distributions conditioned on $U$ is the same as the variational distance between their distributions over the counts of numbers of samples from each bin. These distributions in turn are product distributions over pairs of bins, and thus we can bound \eqref{equ4} by
\begin{equation}
\label{equ5}
\max_{U:B_i \leq B} \sum_{i=1}^{s} d_{TV}(X_{i,yes}^{B_i},X_{i,no}^{B_i})+\frac{1}{s},
\end{equation}
where $X_{i,yes}^{B_i}$ is the distribution over the number of samples a random $p$ from $D_{yes}$ draws from $j_i$ conditioned on the fact that it drew a total of $B_i$ samples from $\{j_i,k_i\}$, and $X_{i,no}^{B_i}$ is defined similarly. However, by Lemma \ref{single bin prob lem} and the fact that $B_i \leq B$, the $i$th term in this sum is at most
$$
O(1/s^2) + m^4 O(\sqrt{B \log(s)} + x_{max}B)(1+x_{max})^{O(\sqrt{B \log(s)} + x_{max}B)}O(\sqrt{x_{max}^2B \log(s)} + x_{max}^2 B)^m.
$$
Summing over all $i$ from $1$ to $s$ and adding in the extra $1/s$ term gives our final bound of
$$
O(1/s) + m^4 s O(\sqrt{B \log(s)} + x_{max}B)(1+x_{max})^{O(\sqrt{B \log(s)} + x_{max}B)}O(\sqrt{x_{max}^2B \log(s)} + x_{max}^2 B)^m.
$$
\end{proof}

\section{One-Dimensional Monotonicity Testing}\label{one dim monotonicity sec}

As a warmup we will prove the $d=1$ version of Theorem \ref{monotonicity thm}.

\subsection{Construction}

In this section, we focus on getting a new lower bound of testing monotone distribution over $[n]$. We will do this by producing a version of the construction in Section \ref{generic construction section} so that a distribution from $D_{yes}$ is always monotone and a distribution from $D_{no}$ is far from monotone with high probability. We begin by proving it for $n$ not too large.

In particular, assume that $n$ is an even number with $C^2 \log(1/\eps) < n < \frac{1}{C^4(\log\frac{1}{\epsilon})^3\epsilon}$ for some sufficiently large constant $C$ and assume that $\eps$ is sufficiently small. We begin with defining a base distribution $Q$ over $S = [n]$ by
$$Q_{2i-1}=Q_{2i} = \frac{5}{4n}+\frac{1}{2n^2}-\frac{i}{n^2}, 1 \leq i \leq \frac{n}{2}.$$

Note that $Q_i \geq Q_j$ for any $i \leq j$ and $\sum_{i=1}^{n} Q_i =1$. We define the sequence of pairs $(j_i,k_i)_{1 \leq i \leq \frac{n}{2}}$ by $j_i = 2i-1, k_i = 2i$. Note that these cover all bins in $S$ exactly once.

To complete the construction, we need to define values of $A_i, g_i,$ and $m$. In particular, we let $m$ be the smallest odd integer that is more than $C\log(1/\eps)$.

Note that we have $nm^3 < 1/(C \eps)$, and thus $\frac{1}{4n^2} > \frac{8m^3 \epsilon}{n}$. Therefore, taking $A_i = A = \frac{8m^3 \epsilon}{n}$ for all $i$, we have $\frac{1}{4n^2} > A$. We also let $g_i=\cos(\frac{\pi}{m})$ for all $i$. Since $A_i$ and $g_i$ are both constants, the distributions of $F_{yes/no}^i$ are identical for all $i$. For convenience of notation, we denote this distribution to be $F_{yes/no}$. Given $\epsilon>0$ sufficiently small, we can assume that $n$ and $m$ are larger than sufficiently large constants. It's easy to check that this construction satisfies $A<\frac{\min_i Q_i}{2+2|g|}$.

In order to prove the monotonicity/non-monotonicity of $D_{yes}/D_{no}$ we will need some properties of the $F_{yes/no}$ with these particular parameters. In fact, we will prove a slightly more general form:
\begin{lem}\label{F bounds lem}
For $m$ a sufficiently large positive odd integer, $g = \cos(\frac{\pi}{m})$, and $A>0$, $F^{A,g,m}_{yes}$ and $F^{A,g,m}_{no}$ have the following properties:
\begin{enumerate}
\item If $\delta$ is taken from either distribution $|\delta| < 2A$ almost surely.
\item If $\delta$ is taken from $F^{A,g,m}_{yes}$, then $\delta \geq 0$ almost surely.
\item If $\delta$ is taken from $F^{A,g,m}_{no}$, then there is a probability of $1/m$ that $\delta$ is negative, in which case we have $\delta < -A/m^2.$
\end{enumerate}
\end{lem}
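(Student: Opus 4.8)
The plan is to work directly with the supports of the two distributions, expressing $F^{A,g,m}_{yes}$ as the uniform distribution over $\{A(\cos(2\pi a/m)+g): 0\le a<m\}$ and $F^{A,g,m}_{no}$ as the uniform distribution over $\{A(\cos((2a+1)\pi/m)+g): 0\le a<m\}$, and exploiting throughout that $m$ is odd (so $\pi = m\cdot(\pi/m)$ is an \emph{odd} multiple of $\pi/m$, while the angles appearing in $F_{yes}$ are \emph{even} multiples of $\pi/m$ and those in $F_{no}$ are \emph{odd} multiples). Property (1) is then immediate: since $|\cos\theta|\le 1$ for every $\theta$ and $g=\cos(\pi/m)\in(0,1)$ for $m\ge 2$, we get $|\cos\theta+g|\le 1+g<2$, so $|\delta|<2A$ for a draw from either distribution.

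For (2), it suffices to show $\cos(2\pi a/m)\ge -g=-\cos(\pi/m)$ for every integer $a$. The values $2\pi a/m\bmod 2\pi$ are exactly the even multiples of $\pi/m$ in $[0,2\pi)$; since $m$ is odd, $\pi$ is not among them, and the nearest even multiples of $\pi/m$ to $\pi$ are $(m\pm 1)\pi/m$, at distance $\pi/m$. Because $\cos$ is decreasing on $[0,\pi]$ and symmetric about $\pi$, its minimum over the even multiples is attained at these two points and equals $\cos(\pi\pm\pi/m)=-\cos(\pi/m)=-g$. Hence $\cos(2\pi a/m)+g\ge 0$, so $\delta\ge 0$ almost surely under $F^{A,g,m}_{yes}$.

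For (3), taking $a=(m-1)/2$ (an integer in $\{0,\dots,m-1\}$ precisely because $m$ is odd) gives angle $(2a+1)\pi/m=\pi$, so the corresponding atom is $A(\cos\pi+g)=A(\cos(\pi/m)-1)=-2A\sin^2(\pi/(2m))$. Using Jordan's inequality $\sin x\ge 2x/\pi$ on $[0,\pi/2]$ with $x=\pi/(2m)$ gives $\sin(\pi/(2m))\ge 1/m$, hence this atom is $\le -2A/m^2<-A/m^2$. To see this is the only negative atom — and therefore that a negative draw has probability exactly $1/m$ — note $\delta<0$ requires $\cos((2a+1)\pi/m)<-\cos(\pi/m)=\cos(\pi-\pi/m)$, which (within $(0,2\pi)$, where all the relevant angles lie) holds exactly for angles in $(\pi-\pi/m,\pi+\pi/m)$; the endpoints $(m\mp1)\pi/m$ are even multiples of $\pi/m$ since $m$ is odd, so the unique odd multiple of $\pi/m$ in that open interval is $m\pi/m=\pi$, i.e.\ $a=(m-1)/2$.

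The computations here are all elementary; the one point requiring genuine care is the repeated parity bookkeeping — keeping straight which multiples of $\pi/m$ are even versus odd, which is exactly where the oddness of $m$ is used — together with the routine estimate $2\sin^2(\pi/(2m))\ge 1/m^2$. No step is expected to present a real obstacle.
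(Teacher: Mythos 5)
Your proof is correct and follows essentially the same route as the paper's: bound $|\cos\theta+g|$ by $1+g<2$ for (1), locate the minimum of $\cos$ over the even multiples of $\pi/m$ at $(m\pm1)\pi/m$ for (2), and identify $a=(m-1)/2$ as the unique index giving a negative atom for (3). The only (minor, and slightly cleaner) difference is that you bound $\cos(\pi/m)-1=-2\sin^2(\pi/(2m))\le -2/m^2$ via Jordan's inequality, whereas the paper Taylor-expands $\cos$ and invokes ``$m$ sufficiently large'' for the same estimate.
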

\begin{proof}
Property 1 follows from the fact that the cosine terms all have absolute value at most 1. For property 2, since $m$ is odd, we have that $\cos(\frac{2\pi a}{m}) \geq \cos(\frac{(m-1)\pi}{m})=-\cos(\frac{\pi}{m})$, so all $\delta$s drawn from $F_{yes}$ will be non-negative. For distribution of $F_{no}$, we have $A(\cos(\frac{2\pi (a+\frac{1}{2})}{m}) + \cos(\frac{\pi}{m}))<0$ if and only if $a=\frac{m-1}{2}$. Since there are $m$ choices of $a$, we conclude that $\delta$ drawn from $F_{no}$ will have $\frac{1}{m}$ chance to be negative, and the negative value is $A(\cos(\frac{\pi}{m}) - 1)$. By Taylor expanding $\cos(x)$ about 0, we find that it is at most $-A(\frac{\pi^2}{2m^2}-\frac{\pi^4}{24m^4})$. Given that $m$ is sufficiently large this is at most $-A/m^2.$
\end{proof}

Lemma \ref{F bounds lem} ensures that $\delta$s drawn from either distribution are small. In addition, it guarantees that $\delta$s drawn from $F_{yes}$ are positive with probability 1 and $\delta$s drawn from $F_{no}$ are negative with probability $\frac{1}{m}$. This makes sure that the distributions in $D_{yes}$ and the distributions in $D_{no}$ are different in terms of being monotone or not, as we can see in the later analysis.

We want to show that a random distribution drawn from $D_{yes}$ is monotone, and a random distribution drawn from $D_{no}$ is some distance from monotone with high probability. This will mean that any monotonicity tester will be able to distinguish between a distribution from $D_{yes}$ and a distribution from $D_{no}$, which is impossible without many samples by Proposition \ref{main bound prop}.
\begin{lem}
A distribution $p$ drawn from $D_{yes}$ is monotone with probability 1.
\end{lem}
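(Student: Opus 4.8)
The plan is to reduce monotonicity on the chain $[n]$ to the $n-1$ consecutive inequalities $p_\ell \geq p_{\ell+1}$, $1 \leq \ell \leq n-1$, and then verify each of these directly using the sign and size bounds on the $\delta_i$ provided by Lemma \ref{F bounds lem}. Recall that for $p$ drawn from $D_{yes}$ we have $p_{2i-1} = Q_{2i-1} + \delta_i$ and $p_{2i} = Q_{2i} - \delta_i$ with each $\delta_i \sim F^{A,g,m}_{yes}$, so by parts 1 and 2 of Lemma \ref{F bounds lem} we may assume $0 \leq \delta_i < 2A$ for all $i$ (with probability $1$).

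First I would handle the ``within a pair'' inequalities, i.e. $\ell = 2i-1$. Here $p_{2i-1} = Q_{2i-1} + \delta_i$ and $p_{2i} = Q_{2i} - \delta_i$; since $Q_{2i-1} = Q_{2i}$ and $\delta_i \geq 0$, we get $p_{2i-1} - p_{2i} = 2\delta_i \geq 0$, so this case is immediate.

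The only case requiring the specifics of the construction is the ``across consecutive pairs'' inequality, $\ell = 2i$, which is really the crux of the argument. Here $p_{2i} = Q_{2i} - \delta_i$ and $p_{2i+1} = Q_{2(i+1)-1} + \delta_{i+1} = Q_{2i+1} + \delta_{i+1}$, and a direct computation from the definition of $Q$ gives $Q_{2i} - Q_{2i+1} = \left(\tfrac{5}{4n} + \tfrac{1}{2n^2} - \tfrac{i}{n^2}\right) - \left(\tfrac{5}{4n} + \tfrac{1}{2n^2} - \tfrac{i+1}{n^2}\right) = \tfrac{1}{n^2}$. Hence
$$ p_{2i} - p_{2i+1} = \frac{1}{n^2} - (\delta_i + \delta_{i+1}) \geq \frac{1}{n^2} - 4A, $$
using $0 \leq \delta_i, \delta_{i+1} < 2A$. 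By the construction we have $A = \tfrac{8m^3\eps}{n} < \tfrac{1}{4n^2}$, so $4A < \tfrac{1}{n^2}$ and therefore $p_{2i} - p_{2i+1} > 0$, as needed.

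Since every consecutive inequality holds and $[n]$ is totally ordered, $p_j \geq p_k$ for all $j < k$, so $p$ is monotone. I do not expect a genuine obstacle here: the construction was rigged precisely so that the per-pair ``staircase'' gap $1/n^2$ in $Q$ strictly dominates the total perturbation $4A$ of any two neighboring pairs; the whole content is packaging the inequality $A < 1/(4n^2)$ from the setup together with the nonnegativity of $\delta_i$ under $F_{yes}$.
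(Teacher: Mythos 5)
Your proposal is correct and follows essentially the same route as the paper's proof: split the consecutive inequalities into the within-pair case (handled by $\delta_i \geq 0$) and the across-pair case (handled by the gap $Q_{2i}-Q_{2i+1} = 1/n^2$ dominating $\delta_i + \delta_{i+1} < 4A < 1/n^2$). The only cosmetic difference is that the paper bounds each $|\delta_j| < 2A < 1/(2n^2)$ individually rather than summing to $4A$, which yields the same conclusion.
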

\begin{proof}
We will prove that $p_i \geq p_{i+1}$ for all $i$. Firstly, we note that for odd $i$, $Q_i=Q_{i+1}$. According to construction, $p_i=Q_i+\delta_{\frac{i+1}{2}}$ and $p_{i+1}=Q_i-\delta_{\frac{i+1}{2}}$. Applying Lemma \ref{F bounds lem}, for $\delta \in F_{yes}$, $\delta \geq 0$ gives $p_i \geq p_{i+1}$.

For even $i$, $p_i=\frac{5}{4n}+\frac{1}{2n^2}-\frac{i}{2n^2}-\delta_{\frac{i}{2}}$ and $p_{i+1}=\frac{5}{4n}+\frac{1}{2n^2}-\frac{i+2}{2n^2}+\delta_{\frac{i+2}{2}}$. Thus,
\begin{align*}
p_i-p_{i+1} & = \frac{1}{n^2} - (\delta_{\frac{i}{2}}+\delta_{\frac{i+2}{2}})
\end{align*}
By Lemma \ref{F bounds lem}, $|\delta_j| < 2A < 1/(2n^2)$ for each $j$. Thus
$$
p_i - p_{i+1} \geq \frac{1}{n^2} - \frac{2}{2n^2} = 0.
$$
This completes our proof.
\end{proof}

In contrast to distributions in $D_{yes}$, the distributions in $D_{no}$ is at least $\epsilon$ far from monotone with high probability. In order to show this, we first need a lemma allowing us to show that some distributions $p$ are far from \emph{any} monotone distribution.

\begin{lem}\label{monotone distance lem}
For a distribution $p$, and $q$ an arbitrary monotone distribution, then $\dtv(p,q) \geq \sum_{i=1}^{\frac{n}{2}} \gamma_i$, where
$$\gamma_i =
\begin{cases}
|p_{2i-1}-p_{2i}|/2 & \textrm{if } p_{2i-1}-p_{2i} < 0 \\
0 & \textrm{if }  p_{2i-1}-p_{2i} \geq 0.
\end{cases}$$
\end{lem}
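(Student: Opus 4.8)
The plan is to exhibit a lower bound on $\dtv(p,q)$ for an \emph{arbitrary} monotone $q$ by isolating, within each pair of consecutive bins $(2i-1,2i)$, the portion of the $L^1$ difference that $q$ is forced to pay precisely because $q$ is monotone while $p$ has a local "ascent" at that pair. The key observation is that if $p_{2i-1} < p_{2i}$, then since $q_{2i-1} \ge q_{2i}$ (monotonicity of $q$ applied to the indices $2i-1 < 2i$), the two quantities $p_{2i-1} - q_{2i-1}$ and $p_{2i} - q_{2i}$ cannot both be small in a compatible way: more precisely, $(q_{2i-1} - p_{2i-1}) + (p_{2i} - q_{2i}) = (q_{2i-1} - q_{2i}) + (p_{2i} - p_{2i-1}) \ge p_{2i} - p_{2i-1} = |p_{2i-1} - p_{2i}|$. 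Hence $|p_{2i-1} - q_{2i-1}| + |p_{2i} - q_{2i}| \ge |p_{2i-1} - p_{2i}| = 2\gamma_i$ whenever $\gamma_i > 0$, and trivially $|p_{2i-1} - q_{2i-1}| + |p_{2i} - q_{2i}| \ge 0 = 2\gamma_i$ when $\gamma_i = 0$.

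From here the proof is essentially bookkeeping. Since the pairs $(2i-1,2i)$ for $1 \le i \le n/2$ partition $[n]$, we have
$$
2\,\dtv(p,q) = \sum_{j=1}^{n} |p_j - q_j| = \sum_{i=1}^{n/2}\bigl(|p_{2i-1} - q_{2i-1}| + |p_{2i} - q_{2i}|\bigr) \ge \sum_{i=1}^{n/2} 2\gamma_i,
$$
and dividing by $2$ gives $\dtv(p,q) \ge \sum_{i=1}^{n/2} \gamma_i$, as claimed. I would state the per-pair inequality as a one-line display, justify the middle algebraic identity by expanding and using $q_{2i-1} \ge q_{2i}$, and then sum.

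I do not anticipate a serious obstacle here; the only mild subtlety is making sure the triangle-inequality step $|a| + |b| \ge (q_{2i-1} - p_{2i-1}) + (p_{2i} - q_{2i})$ is correctly signed — it holds because $|a| \ge a$ for any real $a$, applied with $a = q_{2i-1} - p_{2i-1}$ and with $a = p_{2i} - q_{2i}$ — and noting that this lower bound may be negative, in which case the inequality against $2\gamma_i = 0$ is vacuous but still valid. One should also remark that $q$ need not be supported on $[n]$ in the same way, but since both $p$ and $q$ are distributions on $[n]$ (the lemma implicitly assumes $q$ is a monotone distribution on $[n]$) the index set matches and no truncation issue arises. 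So the whole argument fits in a short paragraph.
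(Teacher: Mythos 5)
Your proposal is correct and matches the paper's proof essentially line for line: both establish the per-pair inequality $|q_{2i-1}-p_{2i-1}|+|q_{2i}-p_{2i}| \geq (q_{2i-1}-p_{2i-1})+(p_{2i}-q_{2i}) = (q_{2i-1}-q_{2i}) + 2\gamma_i \geq 2\gamma_i$ using the monotonicity of $q$, and then sum over the pairs. No gaps.
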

\begin{proof}
We will show that $(|q_{2i-1}-p_{2i-1}|+|q_{2i}-p_{2i}|)/2 \geq \gamma_i$. In particular, if $p_{2i-1}-p_{2i} \geq 0$, $\gamma_i=0$ and we have our desired inequality. If $p_{2i-1}-p_{2i} < 0$, then $\gamma_i = (p_{2i}-p_{2i-1})/2>0$. By definition, $q$ monotone implies $q_{2i-1} \geq q_{2i}$. This means that
$$|q_{2i-1}-p_{2i-1}|+|q_{2i}-p_{2i}| \geq (q_{2i-1}-p_{2i-1})+(p_{2i}-q_{2i}) = q_{2i-1}-q_{2i}+2\gamma_i \geq 2\gamma_i.$$

Summing this inequality over all $i$, we have:
$$\dtv(p,q) = \frac{1}{2}\sum_{i=1}^{n} |q_i-p_i| = \sum_{i=1}^{\frac{n}{2}}((|q_{2i-1}-p_{2i-1}|+|q_{2i}-p_{2i}|))/2 \geq \sum_{i=1}^{\frac{n}{2}} \gamma_i.$$
This completes our proof.
\end{proof}
We can now use Lemma \ref{monotone distance lem} to show that a distribution from $D_{no}$ is far from monotone with high probability.
\begin{lem}
With $99\%$ probability, a random distribution drawn from $D_{no}$ is $\epsilon$ far from monotone.
\end{lem}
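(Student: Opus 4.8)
The plan is to combine Lemma~\ref{F bounds lem}(3) with Lemma~\ref{monotone distance lem} and a Chernoff bound. Recall that for $p$ drawn from $D_{no}$ the perturbations $\delta_i \sim F_{no}^{A,g,m}$ are independent, and by Lemma~\ref{F bounds lem}(3) each $\delta_i$ is negative with probability exactly $1/m$, in which case $\delta_i < -A/m^2$. Since $Q_{2i-1}=Q_{2i}$ in our construction, $p_{2i-1}-p_{2i} = (Q_{2i-1}+\delta_i)-(Q_{2i}-\delta_i) = 2\delta_i$, so whenever $\delta_i<0$ the $i$th quantity $\gamma_i$ in Lemma~\ref{monotone distance lem} satisfies $\gamma_i = |p_{2i-1}-p_{2i}|/2 = |\delta_i| > A/m^2$.

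First I would let $Z$ be the number of indices $i\in\{1,\dots,n/2\}$ with $\delta_i<0$, so that $Z\sim\mathrm{Bin}(n/2,1/m)$ with mean $\mu := \E[Z] = n/(2m)$. By Lemma~\ref{monotone distance lem}, for every monotone distribution $q$ we have
$$\dtv(p,q) \;\geq\; \sum_{i=1}^{n/2}\gamma_i \;\geq\; Z\cdot\frac{A}{m^2} \;=\; Z\cdot\frac{8m\eps}{n},$$
where the last equality uses $A = 8m^3\eps/n$. In particular, if $Z \geq \mu/2 = n/(4m)$ then $\dtv(p,q)\geq 2\eps\geq\eps$ for every monotone $q$, i.e.\ $p$ is $\eps$-far from monotone.

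Next I would check that $\mu$ is a large constant. Since $n > C^2\log(1/\eps)$ and $m$ is the smallest odd integer exceeding $C\log(1/\eps)$ (so $m \leq 2C\log(1/\eps)$ once $\eps$ is small enough that $C\log(1/\eps)\geq 2$), we get $\mu = n/(2m) \geq C/4$, which can be made arbitrarily large by taking $C$ large. A standard multiplicative Chernoff bound then gives $\pr[Z < \mu/2] \leq e^{-\mu/8} \leq e^{-C/32} < 1/100$ for $C$ sufficiently large. Together with the previous paragraph, this shows that with probability at least $99\%$ a distribution drawn from $D_{no}$ is $\eps$-far from monotone.

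The only step requiring genuine care is ensuring $\mu = n/(2m)$ exceeds a large absolute constant, which is precisely where the hypothesis $n > C^2\log(1/\eps)$ (rather than just $n > C\log(1/\eps)$) enters; the rest is a direct application of the two cited lemmas plus Chernoff, and I do not anticipate a real obstacle.
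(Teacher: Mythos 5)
Your proposal is correct and follows essentially the same route as the paper: identify $\gamma_i = -\delta_i$ when $\delta_i<0$, note each is negative independently with probability $1/m$ and magnitude at least $A/m^2 = 8m\eps/n$, and apply a concentration bound to the $\mathrm{Bin}(n/2,1/m)$ count to conclude the total is at least $2\eps$. The paper simply asserts the $99\%$ concentration from $n/m$ being a large constant, whereas you make the Chernoff calculation and the bound $n/(2m)\geq C/4$ explicit; this is a welcome but not substantively different addition.
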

\begin{proof}
Let $\gamma_i$ be as in Lemma \ref{monotone distance lem}, we have that
$$\gamma_i =
\begin{cases}
|p_{2i-1}-p_{2i}|/2 & \textrm{if } p_{2i-1}-p_{2i} < 0 \\
0 & \textrm{if }  p_{2i-1}-p_{2i} \geq 0.
\end{cases}
$$
Note that $p_{2i-1}-p_{2i} = (Q_{2i-1} + \delta_i ) - (Q_{2i} -\delta_i) = 2\delta_i$. Thus we have that
$$\gamma_i =
\begin{cases}
-\delta_i & \textrm{if } \delta_i < 0 \\
0 & \textrm{if }  \delta_i \geq 0.
\end{cases}
$$

By Lemma \ref{F bounds lem}, we have that each $\gamma_i$ is positive (with absolute value at least $8\eps m/n$) independently with probability $1/m$. Let $X$ be the number of these positive terms. We have that $X\sim \mathrm{Bin}(n/2,1/m)$. As $n/m$ is at least a large constant, we have with $99\%$ probability that $X > n/(4m)$. If this holds then by Lemma \ref{monotone distance lem} the distance of $p$ from the nearest monotone distribution is at least
$$
\sum_{i=1}^{n/2} \gamma_i \geq (n/(4m) ) ( 8\eps m/n ) > \eps.
$$
This completes our proof.
\end{proof}

We are now prepared to prove Theorem \ref{monotonicity thm} when $d=1$ and $n < \frac{1}{C^4(\log\frac{1}{\epsilon})^3\epsilon}$ is even. Let $N$ be a sufficiently small multiple of $n/(m^6 \log(n) \eps^2)$ and suppose for sake of contradiction that there is a monotonicity algorithm with a probability $2/3$ of success using only $N$ samples. Running this algorithm should be able to distinguish $N$ samples taken from a random distribution from $D_{yes}$ and a random distribution from $D_{no}$ with probability of success at least $3/5$ since the distribution in the former case will be monotone, and the distribution in the latter will be at least $\eps$-far from monotone with probability at least $99\%$.

On the other hand, we can apply Corollary \ref{main bound cor} here as $B = \Theta(N/n)$ and $x_{max} = O(A n) = O(m^3 \eps)$. Thus, $B x_{max}^2 = O(N m^3 \eps^2/n),$ which is at most a small multiple of $1/\log(s)$. This implies that $\dtv(D_{yes}^N, D_{no}^N) < 1/100$, and thus the difference in the probability that our tester accepts a distribution from $D_{yes}$ given $N$ samples can differ from the probability of accepting a distribution from $D_{no}$ given $N$ samples by at most $1/100$.

This completes the proof when $n$ is even and at most $\frac{1}{C^4(\log\frac{1}{\epsilon})^3\epsilon}$. For other $n$, we let $n_0$ be the largest even number less than both $n$ and $\frac{1}{C^4(\log\frac{1}{\epsilon})^3\epsilon}$. We note that a monotonicity tester on $[n]$ can be used to obtain a monotonicity tester on $[n_0]$ simply by ignoring the extra bins in the domain. Thus, we get a lower bound of $\Omega(n_0/(\log^7(1/\eps) \eps^2)) = \Omega( \min(n,(1/\eps)/ \log^3(1/\eps)) / ( \eps \log^7(1/\eps))).$

This completes our proof.

\section{Multidimensional Monotonicity Testing}\label{multidim monotonicity sec}
In this section, we generalize the results of the previous section to cover $d$-dimensional monotonicity testing.

\subsection{Construction}
For the one dimensional case we were able to modify our monotone base distribution $Q$ to make it non-monotone by exchanging bits of probability mass between adjacent bins. In the high dimensional case however, it is not clear what the appropriate generalization of this should be, especially given that there are pairs of bins $\textbf{i}$ and $\textbf{j}$ that are incomparable to each other in the relevant ordering. Thus, in order to construct $D_{yes/no}$ for the multidimensional case, we have to find comparable pairs of bins to move. Here we introduce the notion of cubes and halfcubes for a distribution over $[n]^d$ so that most bins in a halfcube are comparable to a bin in another halfcube within the same cube. Once again, we start by proving it when $n$ is not too large.

Suppose $Cd\log(1/\eps) < n < \frac{d}{(C^2\log\frac{1}{\epsilon})^3\epsilon}$, and $d<(C^2\log\frac{1}{\epsilon})^3$ for some sufficiently large constant $C$ and that $2d|n$. We begin by separating a distribution over $[n]^d$ into $(\frac{n}{2d})^d$ cubes with each of them having $(2d)^d$ bins, with the idea of using these cubes as a unit to replace the pairs of bins in the one dimensional construction. More formally,
\begin{defn}
Let $\textbf{1}$ be the d-dimensional vector $(1,1,1...,1)$. For $\textbf{i},\textbf{j} \in [n]^d$, we denote $\textbf{i}<\textbf{j}$ when $\textbf{i}_a<\textbf{j}_a$ for all $1 \leq a \leq d$.

For a distribution over $[n]^d$, we define its $\textbf{i}$th cube (for some $\textbf{i} \in [\frac{n}{2d}]^d$) to be the set of $\textbf{j}$th bins where $2d(\textbf{i}-\textbf{1}) < \textbf{j} < 2d\textbf{i}+\textbf{1}$. Within the $\textbf{i}$th cube, we define $J_{\textbf{i}}$ to be the set of bins $\{\textbf{j}:\textbf{j}_1 \leq 2d\textbf{i}_1-d \}$ and $K_{\textbf{i}} = \{\textbf{j}:\textbf{j}_1 > 2d\textbf{i}_1-d \}$. We call $J_{\textbf{i}}$ the first halfcube of the $\textbf{i}$th cube and $K_{\textbf{i}}$ the second halfcube.
\end{defn}
Note that we are separating the $\textbf{i}$th cube into 2 halfcubes based on the magnitude of its first coordinate, so $|J_{\textbf{i}}|=|K_{\textbf{i}}|$ and the $\textbf{i}$th cube is $J_{\textbf{i}} \cup K_{\textbf{i}}$. Our construction will produce distributions that are uniform over each halfcube, so we can construct our base distribution over these halfcubes. In particular, we will use an instantiation of the ensembles from Section \ref{generic construction section} to produce these distributions over halfcubes.

Let the distribution $Q$ over $[\frac{n}{2d}]^d \times \{F,S\}$ given by
$$Q_{\textbf{i},F}=Q_{\textbf{i},S}:=\frac{5(2d)^d}{8n^d} + \frac{2^dd^{d+1}}{4n^{d+1}} - \frac{(\textbf{i}_1+\textbf{i}_2+...\textbf{i}_d)2^{d-1}d^d}{n^{d+1}}, \textbf{i}=(\textbf{i}_1,\textbf{i}_2,...\textbf{i}_d) \in [\frac{n}{2d}]^d.$$
Summing over all $\textbf{i}_1$s and multiply by $d$, we have
$$\sum\limits_{\textbf{i} \in [\frac{n}{2d}]^d }(\textbf{i}_1+\textbf{i}_2+...\textbf{i}_d)=(1+\frac{n}{2d})\frac{n}{4d}(\frac{n}{2d})^{d-1}d=(\frac{d}{2}+\frac{n}{4})(\frac{n}{2d})^d.$$
Therefore, we get
$$\sum\limits_{\textbf{i} \in [\frac{n}{2d}]^d }Q_{\textbf{i},F}+\sum\limits_{\textbf{i} \in [\frac{n}{2d}]^d }Q_{\textbf{i},S}=2(\frac{5}{8}+\frac{d}{4n}-(\frac{d}{2}+\frac{n}{4})(\frac{n}{2d})^d\frac{2^{d-1}d^d}{n^{d+1}})=1.$$
This and the fact that
$$
Q_{\textbf{i},F}=Q_{\textbf{i},S} \geq \frac{5(2d)^d}{8n^d} -\frac{d(n/2d) 2^{d-1} d^d}{n^{d+1}} > (1/4)(2d/n)^d
$$
shows that $Q$ is a valid probability distribution.

As in Definition 3, we define the sequence $(j_{\textbf{i}},k_{\textbf{i}})_{\textbf{i} \in [\frac{n}{2d}]^d}$ where $j_{\textbf{i}}=(\textbf{i},F)$ and $k_{\textbf{i}}=(\textbf{i},S)$ to specify which halfcube of bins we are moving. Then we construct $F_{yes}^{\textbf{i}}$ and $F_{no}^{\textbf{i}}$ by choosing proper $A_{\textbf{i}},g_{\textbf{i}},m$:

In particular, let $m$ be the smallest odd integer that is larger than $C\log(1/\eps)$ where $C$ is a sufficiently large constant. The fact that $n <\frac{d}{(C^2\log\frac{1}{\epsilon})^3\epsilon}$ and $d<(C^2\log\frac{1}{\epsilon})^3$ imply that $nm^3 < \frac{d}{C\epsilon}$, and $\frac{2^{d-3}d^{d+1}}{n^{d+1}} > \frac{2^{d+2}m^3 d^d\epsilon}{n^d}$. We take $A=\frac{2^{d+2}m^3 d^d\epsilon}{n^d}$, noting that $\frac{2^{d-3}d^{d+1}}{n^{d+1}} > A$, and let $A_{\textbf{i}}=A$ for all $\textbf{i}$. Assuming $\epsilon>0$ is sufficiently small (as otherwise there is nothing to prove), we may assume that $n$ and $m$ are at least sufficiently large constants. Let $g_{\textbf{i}}=\cos(\frac{\pi}{m})$ for all $\textbf{i}$.

As the $A_{\textbf{i}}$ and $g_{\textbf{i}}$ are the same for all $\textbf{i}$, we refer to $F_{yes/no}^{A_{\textbf{i}},g_{\textbf{i}},m}$ simply as $F_{yes/no}$ and we note that Lemma \ref{F bounds lem} applies to them.
Noting that $A < \min(Q_{\textbf{i},F},Q_{\textbf{i},S})/(1+|g|)$, we can invoke Definition \ref{D yes no def} to define ensembles  $C_{yes}$ and $C_{no}$ over the set of halfcubes. Using these we construct our actual hard instances over $[n]^d$ as follows:
\begin{defn}
We define ensembles $D_{yes/no}$ of distributions over $[n]^d$ in the following way:
To sample a distribution $q$ from $D_{yes/no}$: first get a sample distribution $p$ over halfcubes from $C_{yes/no}$, one then takes a sample from $q$ by first sampling a halfcube using $p$ and then returning a uniform random sample from that halfcube.
\end{defn}
Note that in this construction, any distribution $q \sim D_{yes/no}$ is uniform inside each halfcube. And if we consider a distribution $p$ over $[\frac{n}{2d}]^d \times \{F,S\}$ where $p_{\textbf{i},F}=\frac{(2d)^d}{2} q_{\textbf{j}}$ where $\textbf{j} \in J_{\textbf{i}}$ and $p_{\textbf{i},S}=\frac{(2d)^d}{2} q_{\textbf{j}}$ where $\textbf{j} \in K_{\textbf{i}}$, we have $p \in C_{yes/no}$ by definition. Since one can produce a sample from $q$ given a sample from $p$, the statistical task of distinguishing whether a distribution $q$ was taken from $D_{yes}$ or $D_{no}$ in $N$ samples is equivalent to the task of distinguishing whether a distribution $p$ was taken from $C_{yes}$ or $C_{no}$ in $N$ samples. We will show by Corollary \ref{main bound cor} that this latter task is hard unless $N$ is large, but first we need to prove that a distribution in $D_{yes}$ is monotone with probability 1.
\begin{lem}
A distribution $q$ drawn from $D_{yes}$ is monotone with probability 1.
\end{lem}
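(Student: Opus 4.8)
The plan is to unwind the definition of $q\sim D_{yes}$, express it through the underlying $p\sim C_{yes}$, and then verify the defining inequality $q_{\textbf j}\ge q_{\textbf j'}$ for every comparable pair $\textbf j<\textbf j'$ by a short case analysis on which cubes and halfcubes contain $\textbf j$ and $\textbf j'$.

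First I would record the shape of $q$: writing $\textbf i$ for the cube containing $\textbf j$ (so $\textbf i_a=\lceil \textbf j_a/(2d)\rceil$), we have $q_{\textbf j}=\frac{2}{(2d)^d}p_{\textbf i,F}$ if $\textbf j\in J_{\textbf i}$ and $q_{\textbf j}=\frac{2}{(2d)^d}p_{\textbf i,S}$ if $\textbf j\in K_{\textbf i}$, where $p\sim C_{yes}$. By Definition \ref{D yes no def}, $p_{\textbf i,F}=Q_{\textbf i,F}+\delta_{\textbf i}$ and $p_{\textbf i,S}=Q_{\textbf i,S}-\delta_{\textbf i}=Q_{\textbf i,F}-\delta_{\textbf i}$ with $\delta_{\textbf i}\sim F_{yes}$, and by Lemma \ref{F bounds lem}(1)--(2) we have $0\le\delta_{\textbf i}<2A$ almost surely. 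Two consequences I would isolate up front: $q$ is constant on each halfcube, and every bin value in cube $\textbf i$ lies in $\left[\frac{2}{(2d)^d}(Q_{\textbf i,F}-\delta_{\textbf i}),\ \frac{2}{(2d)^d}(Q_{\textbf i,F}+\delta_{\textbf i})\right]$.

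Now fix $\textbf j<\textbf j'$ and let $\textbf i,\textbf i'$ be their cubes; since rounding each coordinate up to a multiple of $2d$ is monotone, $\textbf j<\textbf j'$ forces $\textbf i\le\textbf i'$. If $\textbf i=\textbf i'$: since $\textbf j_1<\textbf j'_1$, the case $\textbf j\in K_{\textbf i},\textbf j'\in J_{\textbf i}$ is impossible, so either $\textbf j,\textbf j'$ lie in the same halfcube, whence $q_{\textbf j}=q_{\textbf j'}$, or $\textbf j\in J_{\textbf i}$ and $\textbf j'\in K_{\textbf i}$, whence $q_{\textbf j}-q_{\textbf j'}=\frac{2}{(2d)^d}(p_{\textbf i,F}-p_{\textbf i,S})=\frac{2}{(2d)^d}\cdot 2\delta_{\textbf i}\ge 0$ by Lemma \ref{F bounds lem}(2). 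If $\textbf i\neq\textbf i'$, then $\sum_a\textbf i'_a\ge\sum_a\textbf i_a+1$, so the formula for $Q$ gives $Q_{\textbf i,F}-Q_{\textbf i',F}\ge\frac{2^{d-1}d^d}{n^{d+1}}$; combining this with the two-sided bounds above,
$$
q_{\textbf j}-q_{\textbf j'}\;\ge\;\frac{2}{(2d)^d}\Bigl(Q_{\textbf i,F}-Q_{\textbf i',F}-\delta_{\textbf i}-\delta_{\textbf i'}\Bigr)\;\ge\;\frac{2}{(2d)^d}\Bigl(\frac{2^{d-1}d^d}{n^{d+1}}-4A\Bigr)\;\ge\;0,
$$
the final inequality being exactly where we invoke that $A$ was chosen small relative to the spacing of the $Q_{\textbf i,F}$ (so that $4A$ does not exceed $\frac{2^{d-1}d^d}{n^{d+1}}$). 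As this exhausts all comparable pairs, $q$ is monotone with probability $1$.

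The one genuinely delicate point is the different-cube case: one must notice that the minimum spacing between $Q$-values of comparable cubes is $\frac{2^{d-1}d^d}{n^{d+1}}$ (already attained when $\textbf i$ and $\textbf i'$ differ in a single coordinate by $1$, which does happen for the genuinely dangerous pairs with $\textbf j\in K_{\textbf i}$ and $\textbf j'\in J_{\textbf i'}$, where both perturbations push the wrong way) and then check that the adversarial perturbation $\delta_{\textbf i}+\delta_{\textbf i'}<4A$ is dominated by it. Everything else reduces to $\delta_{\textbf i}\ge 0$ together with the flatness of $q$ on halfcubes.
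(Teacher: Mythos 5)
Your proof takes the same route as the paper's: within a cube, monotonicity reduces to the flatness of $q$ on halfcubes together with $\delta_{\textbf{i}}\ge 0$; across cubes, it reduces to the spacing of the $Q_{\textbf{i},F}$ dominating the perturbations. Your two-sided bound on all bin values in a cube is equivalent to the paper's observation that the minimum over cube $\textbf{i}$ is its $K_{\textbf{i}}$-value and the maximum over cube $\textbf{i}'$ is its $J_{\textbf{i}'}$-value, and your same-cube case analysis is correct.

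The cross-cube step deserves comment, because your more careful bookkeeping exposes a problem that the paper's own proof hides. You are right that the minimum spacing between comparable cubes is $\frac{2^{d-1}d^{d}}{n^{d+1}}$, attained at $\textbf{i}'=\textbf{i}+(1,0,\dots,0)$, and that such pairs do contain dangerous bin pairs ($\textbf{j}\in K_{\textbf{i}}$, $\textbf{j}'\in J_{\textbf{i}'}$ with $\textbf{j}<\textbf{j}'$). The paper instead lower-bounds this spacing by $\frac{2^{d-1}d^{d+1}}{n^{d+1}}$, which is too large by a factor of $d$. But your concluding inequality $4A\le \frac{2^{d-1}d^{d}}{n^{d+1}}$ is not what the construction provides: the paper only arranges $A<\frac{2^{d-3}d^{d+1}}{n^{d+1}}$ (equivalently $32m^3n\eps<d$), hence only $4A<\frac{2^{d-1}d^{d+1}}{n^{d+1}}$, again off by a factor of $d$. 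So the step you describe as "exactly where we invoke that $A$ was chosen small" is exactly the step the stated parameters do not support when $d$ is large and $n$ is near its cap $\frac{d}{(C^2\log(1/\eps))^3\eps}$; to make it go through one needs $32m^3n\eps\le 1$, or one must shrink $A$ by a factor of $d$ (which then propagates into the lemma showing $D_{no}$ is $\eps$-far from monotone). In short: your argument is structurally identical to the paper's and locally sharper, but neither your version nor the paper's version of the final numeric comparison is valid as the parameters stand; you should flag this as a (fixable) factor-of-$d$ slip in the construction rather than assert the inequality.
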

\begin{proof}
Firstly, we note that since we are separating halfcubes based on the magnitude of its first coordinate, for any pair of bins $\textbf{j}$ and $\textbf{k}$ in the $\textbf{i}$th cube where $\textbf{j}<\textbf{k}$, $\textbf{j}$ and $\textbf{k}$ are either in the same halfcube or $\textbf{j}$ is in the first halfcube and $\textbf{k}$ in the second halfcube. If they are in the same halfcube, $q_{\textbf{j}}=q_{\textbf{k}}$ by construction. For the case that $\textbf{j}$ is in the first halfcube and $\textbf{k}$ is in the second halfcube, we show that $q_{\textbf{j}} \geq q_{\textbf{k}}$ by proving that within each cube, any bin in the first halfcube is always heavier than any bin in the second halfcube.  Note that a bin in the first halfcube in the $\textbf{i}$th cube has weight $\frac{Q_{\textbf{i},F} + \delta_{\textbf{i}}}{2^{d-1}d^d}$ and the one in the second halfcube in the $\textbf{i}$th pair has weight $\frac{Q_{\textbf{i},S} - \delta_{\textbf{i}}}{2^{d-1}d^d}$. Since $\delta_{\textbf{i}} \geq 0$ for $\delta_{\textbf{i}}$ drawn from $F_{yes}$ (by Lemma \ref{F bounds lem}) and $Q_{\textbf{i},S}=Q_{\textbf{i},F}$ for all $\textbf{i}$, $\frac{Q_{\textbf{i},F} + \delta_{\textbf{i}}}{2^{d-1}d^d} \geq \frac{Q_{\textbf{i},S} - \delta_{\textbf{i}}}{2^{d-1}d^d}$ always holds. So for any pair of bins $\textbf{j}$ and $\textbf{k}$ in the $\textbf{i}$th cube where $\textbf{j}<\textbf{k}$, $q_{\textbf{j}} \geq q_{\textbf{k}}$.

Secondly, we want to make sure the distribution is monotone across cubes. Note that there exists bins in the $\textbf{i}$th cube that are comparable to some bins in the $\textbf{j}$th cube for $\textbf{j}\neq \textbf{i}$ if and only if $\textbf{i}_a \leq \textbf{j}_a$ for $1 \leq a \leq d$ and there exists $a$ such that $\textbf{i}_a < \textbf{j}_a$. As the previous paragraph implies that bins in the first half of each cube are heavier than bins in the second half, it suffices to prove that any bin in the second halfcube of the $\textbf{i}$th cube (with weight $\frac{Q_{\textbf{i},S} - \delta_{\textbf{i}}}{2^{d-1}d^d}$) is heavier than any bin in the first halfcube in the $\textbf{j}$th cube (with weight $\frac{Q_{\textbf{j},F}+\delta_{\textbf{j}}}{2^{d-1}d^d}$).

We note that the difference,
$$ \frac{Q_{\textbf{i},S} - \delta_{\textbf{i}}}{2^{d-1}d^d} - \frac{Q_{\textbf{j},F}+\delta_{\textbf{j}}}{2^{d-1}d^d}$$
equals
$$\frac{1}{2^{d-1}d^d}\left[ \left( \frac{5(2d)^d}{8n^d} + \frac{2^dd^{d+1}}{4n^{d+1}} - \frac{(\textbf{i}_1+\textbf{i}_2+...\textbf{i}_d)2^{d-1}d^d}{n^{d+1}}\right) - \left(\frac{5(2d)^d}{8n^d} + \frac{2^dd^{d+1}}{4n^{d+1}} - \frac{(\textbf{j}_1+\textbf{j}_2+...\textbf{j}_d)2^{d-1}d^d}{n^{d+1}}\right)-\delta_{\textbf{i}}-\delta_{\textbf{j}} \right]  $$
$$ \geq  \frac{1}{2^{d-1}d^d}\left(\frac{2^{d-1}d^{d+1}}{n^{d+1}}-\delta_{\textbf{i}}-\delta_{\textbf{j}}\right)$$

By Lemma \ref{F bounds lem}, we have $\delta_{\textbf{i}} < \frac{2^{d-2}d^{d+1}}{n^{d+1}}$ for $\delta_{\textbf{i}} \sim F_{yes}$, so $\delta_{\textbf{i}}+\delta_{\textbf{j}} < \frac{2^{d-1}d^{d+1}}{n^{d+1}}$, which completes the proof that all distributions in $D_{yes}$ are monotone.
\end{proof}
In contrast to distributions in $D_{yes}$, the distributions in $D_{no}$ is at least $\epsilon$ far from monotone with high probability. To show this, we first need to prove a lemma about how far a distribution which is uniform over halfcubes is from monotone.
\begin{lem}\label{high dim monotone distance lem}
For a distribution $q$ uniform within each halfcube and $p$ an arbitrary monotone distribution, $\dtv(p,q) \geq \frac{1}{2}\sum_{\textbf{i} \in [\frac{n}{2d}]^d} \gamma_{\textbf{i}}$, where $\gamma_{\textbf{i}} = (\frac{2d-1}{2d})^{d-1}\max(0,\sum_{\textbf{j} \in K_{\textbf{i}}} q_{\textbf{j}}-\sum_{\textbf{j} \in J_{\textbf{i}}} q_{\textbf{j}})$
\end{lem}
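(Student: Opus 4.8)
The plan is to mimic the one-dimensional argument of Lemma \ref{monotone distance lem}, replacing the exchange of mass between a pair of adjacent bins by an exchange of mass between the two halfcubes of a common cube. The cubes $\{\textbf{j} : 2d(\textbf{i}-\textbf{1}) < \textbf{j} < 2d\textbf{i}+\textbf{1}\}$ partition $[n]^d$ as $\textbf{i}$ ranges over $[\frac{n}{2d}]^d$, so it suffices to show that the contribution of the bins of the $\textbf{i}$th cube to $\sum_{\textbf{j}} |p_{\textbf{j}} - q_{\textbf{j}}|$ is at least $\gamma_{\textbf{i}}$; summing over $\textbf{i}$ and dividing by $2$ then yields $\dtv(p,q) \geq \frac12\sum_{\textbf{i}} \gamma_{\textbf{i}}$. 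Write $\alpha_{\textbf{i}} = \sum_{\textbf{j} \in J_{\textbf{i}}} q_{\textbf{j}}$ and $\beta_{\textbf{i}} = \sum_{\textbf{j} \in K_{\textbf{i}}} q_{\textbf{j}}$. Since $q$ is uniform on each halfcube and $|J_{\textbf{i}}| = |K_{\textbf{i}}| = d(2d)^{d-1}$, every bin of $J_{\textbf{i}}$ has mass $\alpha_{\textbf{i}}/|J_{\textbf{i}}|$ and every bin of $K_{\textbf{i}}$ has mass $\beta_{\textbf{i}}/|K_{\textbf{i}}|$. When $\beta_{\textbf{i}} \le \alpha_{\textbf{i}}$ we have $\gamma_{\textbf{i}} = 0$ and there is nothing to prove, so assume $\beta_{\textbf{i}} > \alpha_{\textbf{i}}$.

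The heart of the argument is to build a bijection $\phi : K_{\textbf{i}} \to J_{\textbf{i}}$ that decreases most of its bins in the partial order. I would take $\phi$ to act coordinatewise: on the first coordinate it maps the $d$ first-coordinate values of $K_{\textbf{i}}$ bijectively onto the $d$ first-coordinate values of $J_{\textbf{i}}$ (any bijection works, since each such $J$-value is strictly below each such $K$-value by the definition of the halfcube split); on each of the remaining $d-1$ coordinates, whose range is a block of $2d$ consecutive integers common to $J_{\textbf{i}}$ and $K_{\textbf{i}}$, it applies the cyclic shift sending every value to its predecessor and the minimal value to the maximal value. Then $\phi(\textbf{k}) \in J_{\textbf{i}}$ always, and for every $\textbf{k} \in K_{\textbf{i}}$ none of whose last $d-1$ coordinates is minimal in its block one checks that $\phi(\textbf{k}) < \textbf{k}$ in the partial order. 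Call such a $\textbf{k}$ \emph{good}; the good bins are a $\left(\frac{2d-1}{2d}\right)^{d-1}$ fraction of $K_{\textbf{i}}$, hence there are $\left(\frac{2d-1}{2d}\right)^{d-1}|K_{\textbf{i}}|$ of them.

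Then for a good $\textbf{k}$ with $\textbf{j} := \phi(\textbf{k})$, monotonicity of $p$ and $\textbf{j} < \textbf{k}$ give $p_{\textbf{j}} \ge p_{\textbf{k}}$, so
\[
|p_{\textbf{j}} - q_{\textbf{j}}| + |p_{\textbf{k}} - q_{\textbf{k}}| \ge (p_{\textbf{j}} - q_{\textbf{j}}) + (q_{\textbf{k}} - p_{\textbf{k}}) = (p_{\textbf{j}} - p_{\textbf{k}}) + (q_{\textbf{k}} - q_{\textbf{j}}) \ge q_{\textbf{k}} - q_{\textbf{j}} = \frac{\beta_{\textbf{i}} - \alpha_{\textbf{i}}}{|K_{\textbf{i}}|}.
\]
Since $\phi$ is injective and $J_{\textbf{i}}$, $K_{\textbf{i}}$ are disjoint, summing this over all good $\textbf{k}$ uses each bin of the cube at most once, so the cube's contribution to $\sum_{\textbf{j}}|p_{\textbf{j}} - q_{\textbf{j}}|$ is at least $\left(\frac{2d-1}{2d}\right)^{d-1}|K_{\textbf{i}}| \cdot \frac{\beta_{\textbf{i}} - \alpha_{\textbf{i}}}{|K_{\textbf{i}}|} = \gamma_{\textbf{i}}$. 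Summing over the cubes finishes the proof.

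The only genuinely delicate point is the construction of $\phi$: there is no permutation of $\{1,\dots,2d\}$ that strictly decreases every point, which is precisely why the bound carries the factor $\left(\frac{2d-1}{2d}\right)^{d-1}$ instead of a clean $\frac12(\beta_{\textbf{i}} - \alpha_{\textbf{i}})$. One must verify that the cyclic shift is the optimal choice on each free coordinate and that "good" bins are mapped strictly downward in all $d$ coordinates simultaneously — the first coordinate being automatic from the halfcube split, the other $d-1$ from non-minimality — and that the good bins are exactly a $\left(\frac{2d-1}{2d}\right)^{d-1}$ fraction of $K_{\textbf{i}}$.
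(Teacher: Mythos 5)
Your proof is correct and takes essentially the same approach as the paper: your ``good'' pairs $(\phi(\textbf{k}),\textbf{k})$ are precisely the paper's matching of its sub-boxes $S_{\textbf{i},1}$ and $S_{\textbf{i},2}$ via the translation $\textbf{j}\mapsto\textbf{j}+(d,1,\ldots,1)$, which is where the $\left(\frac{2d-1}{2d}\right)^{d-1}$ factor comes from in both arguments. The only difference is presentational --- you argue pair by pair while the paper compares aggregate sums over the two sub-boxes --- and your closing worry about the cyclic shift being ``optimal'' is unnecessary, since the lemma only claims a lower bound.
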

\begin{proof}
Define $S_{\textbf{i},1}=\{\textbf{j}:\textbf{j} \in J_{\textbf{i}} \textrm{ and }\textbf{j}_a \neq 2d\textbf{i}_a, 2 \leq a \leq d\}$ and $S_{\textbf{i},2}=\{\textbf{j}:\textbf{j} \in K_{\textbf{i}} \textrm{ and } j_a \neq 2d\textbf{i}_a-2d+1, 2 \leq a \leq d \}$, we can pair a bin $\textbf{j} \in S_{\textbf{i},1}$ to $\textbf{k} \in S_{\textbf{i},2}$ where $\textbf{j}+(d,1,1,\ldots,1)=\textbf{k}$, so that for each such pair $\textbf{j}<\textbf{k}$. The probability that a random bin in the first halfcube of the $\textbf{i}$th cube lies in $S_{\textbf{i},1}$ is $(\frac{2d-1}{2d})^{d-1}$. Similarly, the probability that a random bin in the second halfcube of the $\textbf{i}$th cube lies in $S_{\textbf{i},2}$ is $(\frac{2d-1}{2d})^{d-1}$. Given that $q$ is uniform within each halfcube, we can get $\gamma_{\textbf{i}}=\max(0,\sum\limits_{\textbf{j} \in S_{\textbf{i},2}} q_{\textbf{j}}-\sum\limits_{\textbf{j} \in S_{\textbf{i},1}} q_{\textbf{j}})$.

Note that since $S_{\textbf{i},1} \cup S_{\textbf{i},2} \subset J_{\textbf{i}} \cup K_{\textbf{i}}$,
$$\dtv(p,q)=\frac{1}{2}\sum\limits_{\textbf{i} \in [\frac{n}{2d}]^d} \sum\limits_{\textbf{j} \in J_{\textbf{i}} \cup K_{\textbf{i}}} |p_{\textbf{j}}-q_{\textbf{j}}| \geq \frac{1}{2}\sum\limits_{\textbf{i} \in [\frac{n}{2d}]^d} \sum\limits_{\textbf{j} \in S_{\textbf{i},1} \cup S_{\textbf{i},2}} |p_{\textbf{j}}-q_{\textbf{j}}| \geq \frac{1}{2} \sum_{\textbf{i} \in [\frac{n}{2d}]^d}\left(\left|\sum\limits_{\textbf{j} \in S_{\textbf{i},1}} p_{\textbf{j}}-\sum\limits_{\textbf{j} \in S_{\textbf{i},1}} q_{\textbf{j}}\right|+\left|\sum\limits_{\textbf{j} \in S_{\textbf{i},2}} p_{\textbf{j}}-\sum\limits_{\textbf{j} \in S_{\textbf{i},2}} q_{\textbf{j}}\right|\right)$$

It suffices to prove that $\left(\left|\sum\limits_{\textbf{j} \in S_{\textbf{i},1}} p_{\textbf{j}}-\sum\limits_{\textbf{j} \in S_{\textbf{i},1}} q_{\textbf{j}}\right|+\left|\sum\limits_{\textbf{j} \in S_{\textbf{i},2}} p_{\textbf{j}}-\sum\limits_{\textbf{j} \in S_{\textbf{i},2}} q_{\textbf{j}}\right|\right) \geq \gamma_{\textbf{i}}$ for all $\textbf{i}$. Given that $p$ is monotone and $\textbf{k}>\textbf{j}$, $p_{\textbf{j}} \geq p_{\textbf{k}}$ for all pairs of $\textbf{j}$ and $\textbf{k}$ in $S_{\textbf{i},1}$ and $S_{\textbf{i},2}$ with $\textbf{k} = \textbf{j}+(d,1,\ldots,1)$. Therefore, summing over $S_{\textbf{i},1}$ and $S_{\textbf{i},2}$, $\sum\limits_{\textbf{j} \in S_{\textbf{i},1}} p_{\textbf{j}} \geq \sum\limits_{\textbf{j} \in S_{\textbf{i},2}} p_{\textbf{j}}$ for all $\textbf{i}$.

If $\sum_{\textbf{j} \in S_{\textbf{i},1}} q_{\textbf{j}}-\sum\limits_{\textbf{j} \in S_{\textbf{i},2}}q_{\textbf{j}} \geq 0$, then $\gamma_{\textbf{i}} = 0$, and we have our desire inequality.

If $\sum_{\textbf{j} \in S_{\textbf{i},1}} q_{\textbf{j}}-\sum\limits_{\textbf{j} \in S_{\textbf{i},2}}q_{\textbf{j}} < 0$, then $\gamma_{\textbf{i}}=\sum\limits_{\textbf{j} \in S_{\textbf{i},2}} q_{\textbf{j}}-\sum\limits_{\textbf{j} \in S_{\textbf{i},1}}q_{\textbf{j}}>0$.
In this case,
\begin{align*}
|\sum_{\textbf{j} \in S_{\textbf{i},1}} p_{\textbf{j}}-\sum_{\textbf{j} \in S_{\textbf{i},1}} q_{\textbf{j}}|+|\sum_{\textbf{j} \in S_{\textbf{i},2}} p_{\textbf{j}}-\sum_{\textbf{j} \in S_{\textbf{i},2}} q_{\textbf{j}}| & \geq (\sum_{\textbf{j} \in S_{\textbf{i},1}} p_{\textbf{j}}-\sum_{\textbf{j} \in S_{\textbf{i},1}} q_{\textbf{j}})+(\sum_{\textbf{j} \in S_{\textbf{i},2}} q_{\textbf{j}}-\sum_{\textbf{j} \in S_{\textbf{i},2}} p_{\textbf{j}})\\
& = (\sum_{\textbf{j} \in S_{\textbf{i},1}} p_{\textbf{j}}-\sum_{\textbf{j} \in S_{\textbf{i},2}} p_{\textbf{j}})+(\sum\limits_{\textbf{j} \in S_{\textbf{i},2}} q_{\textbf{j}}-\sum_{\textbf{j} \in S_{\textbf{i},1}} q_{\textbf{j}}) \geq \gamma_{\textbf{i}}.
\end{align*}
Summing this inequality over all $\textbf{i}$, we have that $\dtv(p,q) \geq \frac{1}{2}\sum_{\textbf{i} \in [\frac{n}{2d}]^d}\gamma_{\textbf{i}}$.
\end{proof}
This lemma is a multidimensional analogue of Lemma \ref{monotone distance lem}. It gives a lower bound of distance from monotone for a distribution uniform within each halfcube, which helps us prove that a random distribution from $D_{no}$ is not close to being monotone. We will prove in the next lemma that it's at least $\epsilon$ far from monotone with probability at least $99 \%$.
\begin{lem}
With $99\%$ probability, a random distribution drawn from $D_{no}$ is at least $\epsilon$ far from monotone.
\end{lem}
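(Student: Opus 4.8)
The plan is to mirror the one-dimensional argument, using Lemma~\ref{high dim monotone distance lem} in place of Lemma~\ref{monotone distance lem}. First I would compute the $\gamma_{\textbf{i}}$ appearing in that lemma for a distribution $q$ drawn from $D_{no}$. Since such a $q$ is uniform on each halfcube and corresponds (as in the construction) to a distribution $p \in C_{no}$ with $p_{\textbf{i},F} = Q_{\textbf{i},F}+\delta_{\textbf{i}}$ and $p_{\textbf{i},S}=Q_{\textbf{i},S}-\delta_{\textbf{i}}$, we have $\sum_{\textbf{j}\in J_{\textbf{i}}}q_{\textbf{j}} = Q_{\textbf{i},F}+\delta_{\textbf{i}}$ and $\sum_{\textbf{j}\in K_{\textbf{i}}}q_{\textbf{j}} = Q_{\textbf{i},S}-\delta_{\textbf{i}}$, so that, using $Q_{\textbf{i},F}=Q_{\textbf{i},S}$, $\sum_{\textbf{j}\in K_{\textbf{i}}}q_{\textbf{j}}-\sum_{\textbf{j}\in J_{\textbf{i}}}q_{\textbf{j}} = -2\delta_{\textbf{i}}$. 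Hence $\gamma_{\textbf{i}} = \left(\frac{2d-1}{2d}\right)^{d-1}\max(0,-2\delta_{\textbf{i}})$, which is nonzero exactly when $\delta_{\textbf{i}}<0$.

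Next I would invoke Lemma~\ref{F bounds lem}: each $\delta_{\textbf{i}}$ is drawn independently from $F_{no}$, is negative with probability $1/m$, and when negative satisfies $\delta_{\textbf{i}}<-A/m^2$. Combining this with the elementary bound $\left(\frac{2d-1}{2d}\right)^{d-1}\geq \frac{1}{2}$, we get that, independently over $\textbf{i}$, each $\gamma_{\textbf{i}}$ equals $0$ with probability $1-1/m$ and is at least $A/m^2$ with probability $1/m$. Let $X$ be the number of cubes $\textbf{i}\in[\frac{n}{2d}]^d$ with $\delta_{\textbf{i}}<0$, so that $X\sim\mathrm{Bin}(s,1/m)$ with $s=(\frac{n}{2d})^d$. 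Because $n$ is a sufficiently large multiple of $d\log(1/\eps)$, the ratio $\frac{n}{2d}$ is large while $m=O(\log(1/\eps))$, so $\E[X]=s/m$ is at least a sufficiently large constant; a Chernoff bound then gives $X\geq s/(2m)$ with probability at least $99\%$.

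Finally, on that event Lemma~\ref{high dim monotone distance lem} gives, for every monotone distribution $p$,
$$\dtv(p,q) \geq \frac{1}{2}\sum_{\textbf{i}}\gamma_{\textbf{i}} \geq \frac{1}{2}\cdot X\cdot\frac{A}{m^2} \geq \frac{1}{2}\cdot\frac{s}{2m}\cdot\frac{A}{m^2} = \frac{sA}{4m^3}.$$
Substituting the values $A=\frac{2^{d+2}m^3 d^d\eps}{n^d}$ and $s=\frac{n^d}{2^d d^d}$ fixed in the construction yields $sA = 4m^3\eps$, whence $\dtv(p,q)\geq\eps$; since this holds for every monotone $p$, the distribution $q$ is $\eps$-far from monotone, and this occurs with probability at least $99\%$.

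I do not expect a genuine obstacle here: the argument has the same shape as the one-dimensional lemma. The only care needed is the bookkeeping of constants — one must check that the choice $A=\frac{2^{d+2}m^3 d^d\eps}{n^d}$ was made precisely so that $sA/(4m^3)$ equals $\eps$ rather than something smaller, and that the factor of $2$ in $-2\delta_{\textbf{i}}$ together with $\left(\frac{2d-1}{2d}\right)^{d-1}\geq\frac{1}{2}$ leaves no slack loss (in fact $\left(\frac{2d-1}{2d}\right)^{d-1}\to e^{-1/2}$, so the final inequality is strict). One also needs that $\E[X]=s/m$ exceeds the constant demanded by the $99\%$ Chernoff bound, which is exactly what the hypothesis that $n$ is at least a sufficiently large multiple of $d\log(1/\eps)$ provides.
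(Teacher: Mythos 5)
Your proposal is correct and follows the paper's own argument essentially verbatim: compute $\sum_{\textbf{j}\in K_{\textbf{i}}}q_{\textbf{j}}-\sum_{\textbf{j}\in J_{\textbf{i}}}q_{\textbf{j}}=-2\delta_{\textbf{i}}$, apply Lemma \ref{F bounds lem} to get each $\gamma_{\textbf{i}}$ nonzero independently with probability $1/m$ and of size $\Omega(A/m^2)$, concentrate the binomial count $X$, and sum via Lemma \ref{high dim monotone distance lem}. The constant bookkeeping ($sA=4m^3\eps$ and $(\frac{2d-1}{2d})^{d-1}>e^{-1/2}$) matches what the paper does.
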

\begin{proof}
By the definition of $D_{no}$, if $q$ is sampled from $D_{no}$, we have that
\begin{align*}
\sum_{\textbf{j} \in K_{\textbf{i}}} q_{\textbf{j}}-\sum_{\textbf{j} \in J_{\textbf{i}}} q_{\textbf{j}} & = p_{\textbf{i},S} - p_{\textbf{i},F} = \\
& = (Q_{\textbf{i},S}-\delta_{\textbf{i}} ) - (Q_{\textbf{i},F}+\delta_{\textbf{i}} )\\
& = -2\delta_{\textbf{i}}.
\end{align*}
Where $p$ is the corresponding distribution from $C_{no}$. Therefore, in the notation of Lemma \ref{high dim monotone distance lem} we have that
$$
\gamma_{\textbf{i}} = \begin{cases} 0 & \textrm{if }\delta_{\textbf{i}} \geq 0\\ -2\left(\frac{2d-1}{2d} \right)^{d-1}\delta_{\textbf{i}} & \textrm{if }\delta_{\textbf{i}} < 0.\end{cases}
$$
Note that $\left(\frac{2d-1}{2d} \right)^{d-1} = \frac{1}{(1+1/(2d-1))^{d-1}} > e^{-1/2}$. Thus, by Lemma \ref{F bounds lem}, this means that $\gamma_{\textbf{i}}$ is non-zero independently with probability $1/m$ and if it is non-zero, it is at least $2^{d+2}m d^d \eps/n^d$.

Letting $X$ be the number of non-zero $\gamma_{\textbf{i}}$'s. It is distributed as $\mathrm{Bin}((n/2d)^d,1/m)$ and so with probability at least $99\%$ is at least $(n/2d)^d/(2m)$. In such a case we have that the distance of $q$ from uniform is at least
\begin{align*}
\frac{1}{2}\sum_{\textbf{i} \in [\frac{n}{2d}]^d}\gamma_{\textbf{i}} & \geq X 2^{d+1}m d^d \eps/n^d > \eps.
\end{align*}

\end{proof}
We have proved that a distribution in $D_{yes}$ is monotone with probability 1 and a distribution in $D_{no}$ is $\epsilon$ far from monotone with $99\%$ probability. In the next section, we will apply Proposition \ref{main bound prop} to use this to show that one cannot build a monotonicity tester with too few samples.

\subsection{Lower Bound of Multidimensional Monotonicity Testing}

Here we prove Theorem \ref{monotonicity thm} starting with the case where $n$ is at most $\frac{d}{(C^2\log\frac{1}{\epsilon})^3\epsilon}$ and is a multiple of $2d$. Let $N$ be a sufficiently small multiple of $(n/2d)^d (1/\eps)^2 / (d m^6 \log(1/\eps))$ and suppose for sake of contradiction that there is a tester that tests monotonicity over $[n]^d$ with $N$ samples. As a distribution from $D_{yes}$ is monotone and a distribution from $D_{no}$ is $\eps$-far from monotone with $99\%$ probability, this tester can reliably distinguish $N$ samples from a distribution from $D_{yes}$ from $N$ samples from a distribution from $D_{no}$. However, this is equivalent to distinguishing $C_{yes}$ from $C_{no}$, which is difficult by Proposition \ref{main bound prop}.

In particular, in the context of Corollary \ref{main bound cor}, we have that $B = O(N (n/2d)^d)$ and $x_{max} = O(m^3\eps) < 1/10$. This makes $Bx_{max}^2$ at most a small multiple of $1/(d\log(1/\eps)) < \log(s)$. Thus, we can apply Corollary \ref{main bound cor} and conclude that $\dtv(C_{yes}^N,C_{no}^N) < 1/100$ and thus that one cannot distinguish the two with $N$ samples, providing our contradiction.

For $n$ not of the desired form, let $n_0$ be the largest integer smaller than $n$ that is both a multiple of $2d$ and at most $\frac{d}{(C^2\log\frac{1}{\epsilon})^3\epsilon}$. As monotonicity testing over $[n_0]^d$ is a special case of monotonicity testing over $[n]^d$, we obtain a lower bound of
$$
\Omega((n_0/2d)^d (1/\eps)^2 / (d m^6 \log(1/\eps))) = 2^{-O(d)}d^{-d} \eps^{-2} \log^{-7}(1/\eps) \min(n,d \eps^{-1} \log^{-3}(1/\eps))^d.
$$
This completes our proof.

\section{Log Concavity Distribution Testing}\label{log concavity sec}
To prove our lower bound for log-concavity testing, we will again use Proposition \ref{main bound prop} to construct indistinguishable ensembles $D_{yes}$ and $D_{no}$. In particular, we need to carefully instantiate our construction so that distributions from $D_{yes}$ are log-concave almost surely, while distributions from $D_{no}$ are $\epsilon$ far with high probability. Then Proposition 1 will imply that these ensembles are indistinguishable without a large number of samples, giving us a desired lower bound.

\subsection{Construction}

The intuition for log concavity testing over $[n]$ is: start with a log concave base distribution $Q$ over $[n]$ and separate it into groups of 6 bins, then modify the 2nd and 5th bin in each group. The reason we choose to move bins in this way is that with only 1 bin being moved in each triple, it's easier to evaluate how each move affects log concavity. As long as the size of the move is small enough, the distribution will still be log concave. On the other hand, large moves cause it to be $\epsilon$ far from log concavity.

We begin by constructing the base distribution $Q$ over $[n]$. Firstly, we assume that with $C\log(1/\eps) < n <\frac{1}{C^2\epsilon^{\frac{1}{2}}(\log \frac{1}{\epsilon})^{\frac{3}{2}}}$ for some sufficiently large constant $C$ and that $n$ is a multiple of $6$. We let $Q_i=\frac{b}{n}e^{-(\frac{i}{n})^2}$ with $b=\frac{n}{\sum\limits_{i=1}^{n}e^{-(\frac{i}{n})^2}}$. Observe that $Q_i^2=\frac{b^2}{n^2}e^{-\frac{2i^2}{n^2}}<\frac{b^2}{n^2}e^{-\frac{2i^2+2}{n^2}}=Q_{i-1}Q_{i+1}$, we use this $Q$ as it is in some sense roughly the most log concave that it can be and $b \in (1,e)$ is a normalization factor that ensures $\sum\limits_{i=1}^{n} Q_i =1$. Additionally, $Q_i>Q_j$ for $i<j$. Let the sequence $(j_i,k_i)_{1 \leq i \leq \frac{n}{6}}$ be defined by $j_i=6i-4$ and $k_i=6i-1$.

Let $m$ be the smallest odd integer larger than $C\log n$. Since $n<\frac{1}{48C^{\frac{3}{2}}\epsilon^{\frac{1}{2}}(\log \frac{1}{\epsilon})^{\frac{3}{2}}}$, we have $m < C \log \frac{1}{\epsilon}$. For $1 \leq i \leq \frac{n}{6}$, we take $$C_i=n^3(Q_{6i-1}-\sqrt{Q_{6i}Q_{6i-2}})=bn^2e^{\frac{36i^2-12i+1}{n^2}}(1-e^{-\frac{1}{n^2}})$$
Given $\epsilon>0$ sufficiently small, we can assume that $n$ and $m$ are bigger than sufficiently large constants. Therefore, we have
$$\frac{1}{2n^2}<1-e^{-\frac{1}{n^2}}<\frac{1}{n^2}.$$
Thus, $C_i=\Theta(1)$.

Next, we define the sequence of swapped bins by $(j_i,k_i) = (6i-4,6i-1)$ for $1\leq i \leq n/6$. Finally, we define $A_i$ and $g_i$ so that $F_{yes}$ and $F_{no}$ are given by $C_i/n^3 - Cm^3\eps/n (\cos(\frac{2\pi a}{m}) + \cos(\frac{\pi a}{m}))$ and $C_i/n^3 - Cm^3\eps/n (\cos(\frac{2\pi (a+\frac{1}{2})}{m}) + \cos(\frac{\pi a}{m}))$, respectively. It will be useful to compare this to another construction producing the same result. Namely:
$$
Q_a' := \begin{cases}Q_a & \textrm{if } i \not \equiv 1 \pmod{3}\\ Q_a+ C_i/n^3 & \textrm{if } x = 6i-4 \\ Q_a - C_i/n^3 & \textrm{if } x = 6i-1. \end{cases}
$$
Then we can likewise construct $D_{yes/no}$ from $Q'$ using the same sequence of $(j_i,k_i)$ and letting $A_i = -Cm^3\eps/n$ and $g_i = - \cos(\frac{\pi a}{m})$ for all $i$. We will switch back and forth between these two interpretations as necessary. We note that the $Q_i'$ are all $\Theta(1/n)$ and therefore the $x_{max}$ in the primed interpretation of the construction is $O(Cm^3\eps)$.

We now have some important properties to prove about this construction. Namely that distributions from $D_{yes}$ are log-concave, distributions from $D_{no}$ are far from that and that the two are indistinguishable with few samples. To begin:

\begin{lem}
Any distribution $p$ in $D_{yes}$ is log concave with probability 1.
\end{lem}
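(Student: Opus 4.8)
The plan is to check the defining inequalities $p_j^2 \ge p_{j-1}p_{j+1}$ for every $2 \le j \le n-1$ directly; since each $p_j$ will be seen to be positive, $p$ has full support $[n]$, so it is log concave. Recall that a distribution $p$ drawn from $D_{yes}$ agrees with $Q$ except that $p_{6i-4} = Q_{6i-4}+\delta_i$ and $p_{6i-1} = Q_{6i-1}-\delta_i$, where $\delta_i$ has the distribution of $C_i/n^3 - \tfrac{Cm^3\eps}{n}\bigl(\cos(2\pi a/m) + \cos(\pi/m)\bigr)$. The first step is to extract the two facts about $\delta_i$ that drive everything: almost surely $0 < \delta_i \le C_i/n^3$. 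The upper bound holds because $m$ is odd, so $\cos(2\pi a/m) \ge -\cos(\pi/m)$ and the subtracted quantity is nonnegative; the lower bound holds because that quantity is at most $\tfrac{2Cm^3\eps}{n}$, which is smaller than $C_i/n^3$ since the hypothesis $n < \tfrac{1}{C^2\eps^{1/2}(\log(1/\eps))^{3/2}}$ forces $Cm^3\eps n^2$ to be a small constant while $C_i = \Omega(1)$. I will also use repeatedly the defining identity $C_i/n^3 = Q_{6i-1} - \sqrt{Q_{6i}Q_{6i-2}}$.

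Next I reduce to finitely many inequalities. The bins of $[n]$ split into the $n/6$ groups, consecutive modified bins are at least $3$ apart, so each inequality $p_j^2 \ge p_{j-1}p_{j+1}$ involves at most one modified bin, and it suffices to treat, for each group $i$, the six bins $j \in \{6i-5,\dots,6i\}$. Four of these are immediate. At $j = 6i-4$ we have $p_{6i-4} \ge Q_{6i-4}$, so $p_{6i-4}^2 \ge Q_{6i-4}^2 \ge Q_{6i-5}Q_{6i-3} = p_{6i-5}p_{6i-3}$. At $j = 6i-1$ we have $p_{6i-1} \ge Q_{6i-1}-C_i/n^3 = \sqrt{Q_{6i}Q_{6i-2}} > 0$, so $p_{6i-1}^2 \ge Q_{6i}Q_{6i-2} = p_{6i-2}p_{6i}$. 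At $j = 6i-2$ and $j = 6i$ we have $p_{6i-1} \le Q_{6i-1}$, so $p_j^2 = Q_j^2 \ge Q_{j-1}Q_{j+1} \ge p_{j-1}p_{j+1}$, the middle step being log concavity of $Q$ (with $Q_{6i-1}$ replaced by the no-larger $p_{6i-1}$).

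The only substantive cases are $j = 6i-5$ and $j = 6i-3$, where the \emph{increased} bin $6i-4$ enters as a factor, multiplied by an unmodified neighbor; since $\delta_i \le C_i/n^3$, it suffices to prove
\[ C_i/n^3 \le \frac{Q_{6i-5}^2 - Q_{6i-6}Q_{6i-4}}{Q_{6i-6}}, \qquad C_i/n^3 \le \frac{Q_{6i-3}^2 - Q_{6i-4}Q_{6i-2}}{Q_{6i-2}}. \]
Substituting $Q_j = \tfrac{b}{n}e^{-(j/n)^2}$ and $C_i/n^3 = \tfrac{b}{n}e^{-(6i-1)^2/n^2}(1-e^{-1/n^2})$ and cancelling $\tfrac{b}{n}$, both right-hand sides equal $\tfrac{b}{n}e^{-(36i^2-48i+14)/n^2}(1-e^{-2/n^2})$, so, using $\tfrac{1-e^{-2/n^2}}{1-e^{-1/n^2}} = 1+e^{-1/n^2}$, each inequality reduces to $e^{(13-36i)/n^2} \le 1 + e^{-1/n^2}$, which holds for every $i \ge 1$ (the left side is below $1$, the right side above $1$), with room to spare.

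The main obstacle is precisely this last verification. Once the raw log-concavity gaps of $Q$, which are $\Theta(1/n^4)$, are divided by the neighboring masses $Q_j = \Theta(1/n)$, both $C_i/n^3$ and the resulting gap-to-mass ratios are $\Theta(1/n^3)$, so crude estimates are not affordable and one must keep the exponents $(j/n)^2$ exactly. Conceptually it works because $C_i/n^3 = Q_{6i-1}-\sqrt{Q_{6i}Q_{6i-2}}$ is precisely the one-step log-concavity margin of $Q$ at index $6i-1$, and the Gaussian-type profile $Q_j = \tfrac{b}{n}e^{-(j/n)^2}$ is strictly more log concave at the smaller indices $6i-5$ and $6i-3$; the displayed inequalities make this quantitative, and everything else is routine bookkeeping.
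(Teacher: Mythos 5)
Your proof is correct and follows essentially the same route as the paper: check the defining inequality at each residue class mod $6$, using $0<\delta_i\le C_i/n^3$, with the location $6i-1$ handled by the definition of $C_i$, the locations $6i-4$, $6i-2$, $6i$ by $\delta_i>0$ together with strict log-concavity of $Q$, and the locations $6i-5$, $6i-3$ by comparing $\delta_i$ to the log-concavity margin of $Q$. If anything, your treatment of the last two cases is more careful than the paper's, which bounds $Q_{6i-5}^2-Q_{6i-6}Q_{6i-4}$ as $\Omega(1/n^3)$ when it is in fact $\Theta(1/n^4)$ --- the same order as $\delta_i Q_{6i-6}$ --- so the explicit exponent computation you carry out is precisely what is needed to close that comparison.
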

\begin{proof}
Consider a distribution $p \sim D_{yes}$, given a sample of $\delta_i \in F_{yes}^i$, it moves $(6i-4)$th bin up by $\delta_i$ and $(6i-1)$th bin down by $\delta_i$. We note that applying Lemma \ref{F bounds lem} to the $Q'$ formulation we have that $\delta_i \leq C_i/n^3$. On the other hand, as $n^2 \ll C^{-1} (1/\eps) (1/m^3),$ we have that $\delta_i > 0$ for all $i$. Using this, we can check log-concavity of $p$ at each $i$ based on $i\pmod 6$. In particular,
\begin{align*}
p_{6i-5}^2 - p_{6i-4}p_{6i-6} & = Q_{6i-5}^2 - Q_{6i-6}(Q_{6i-4}+\delta_i) \\
& = Q_{6i-5}^2 - Q_{6i-6}Q_{6i-4} - \delta_iQ_{6i-6}\\
& = \Omega(1/n^3) - \Omega(1/n^4) > 0.
\end{align*}
We are similarly still log-concave at $6i-3$. We also have that
\begin{align*}
p_{6i-1} - \sqrt{p_{6i}p_{6i-2}} & = Q_{6i-1} -\delta_i - \sqrt{Q_{6i}Q_{6i-2}}\\
& = C_i/n^3 - \delta_i \geq 0.
\end{align*}
The other locations follow immediately from $\delta_i >0$ as
$$
p_{6i-4}^2 - p_{6i-3}p_{6i-5} > Q_{6i-4}^2 - Q_{6i-3}Q_{6i-5} > 0,
$$
and similarly for $6i-2$ and $6i$.
\end{proof}

In order to show that $D_{no}$ is likely far from log-concave we need a Lemma to allow us to show how far a distribution is from log-concave.
\begin{lem}\label{distance from log concave lem}
For a distribution $p$ over $[n]$ with $p_{3i-2} > p_{3i}> \frac{4}{5}p_{3i-2}$, $p_{3i-1}>\frac{3}{4}p_{3i}$ for all $1 \leq i \leq \frac{n}{3}$, and $q$ any log concave distribution over $[n]$, then $\dtv(p,q) \geq \frac{1}{2}\sum_{i=1}^{\frac{n}{3}} \max (0, \sqrt{p_{3i-2}p_{3i}}-p_{3i-1})$.
\end{lem}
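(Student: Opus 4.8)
The plan is to follow the template of Lemma \ref{monotone distance lem}: bound the contribution of each consecutive triple $\{3i-2,3i-1,3i\}$ to $2\dtv(p,q)=\sum_j|p_j-q_j|$ from below by $\gamma_i:=\max(0,\sqrt{p_{3i-2}p_{3i}}-p_{3i-1})$, and then sum over $i$. Fix $i$ and write $a=p_{3i-2}$, $b=p_{3i-1}$, $c=p_{3i}$, $a'=q_{3i-2}$, $b'=q_{3i-1}$, $c'=q_{3i}$; the goal for this triple is $|a-a'|+|b-b'|+|c-c'|\ge\gamma_i$. If $\gamma_i=0$ there is nothing to do, so assume $b<\sqrt{ac}$, so that $\gamma_i=\sqrt{ac}-b$. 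The only information we extract from $q$ is log-concavity at $3i-1$, i.e. $b'\ge\sqrt{a'c'}$.

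First I would record the consequences of the hypotheses: from $b>\tfrac34 c>\tfrac34\cdot\tfrac45 a=\tfrac35 a$ and $\sqrt{ac}<a$ (using $c<a$) we get $\gamma_i<\tfrac25 a$. Now argue by contradiction, assuming $|a-a'|+|b-b'|+|c-c'|<\gamma_i$. Setting $u=|a-a'|$ and $v=|c-c'|$ we then have $u+v<\gamma_i<\tfrac25 a$, and in particular $u<a$ and $v<\tfrac25 a<c$, so $a'\ge a-u>0$ and $c'\ge c-v>0$.

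The heart of the argument is the elementary inequality
$$\sqrt{(a-u)(c-v)}\ \ge\ \sqrt{ac}-(u+v)\qquad\text{whenever }0\le u<a,\ 0\le v<c,\ u+v<\tfrac34 a,$$
which I would prove by noting the right side may be assumed positive, squaring, and cancelling to reduce to $u(2\sqrt{ac}-c)+v(2\sqrt{ac}-a)\ge u^2+uv+v^2$; here $\tfrac45 a<c<a$ makes both coefficients on the left at least $\tfrac34 a$ (indeed $2\sqrt{ac}-c>c>\tfrac45 a$ and $2\sqrt{ac}-a\ge(2\sqrt{4/5}-1)a>\tfrac34 a$), so the left side is at least $\tfrac34 a(u+v)\ge(u+v)^2\ge u^2+uv+v^2$. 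Applying this with our $u,v$ and using $b'\ge\sqrt{a'c'}\ge\sqrt{(a-u)(c-v)}$ yields $b'\ge\sqrt{ac}-(u+v)$, hence $b'-b\ge\gamma_i-(u+v)>0$, so $|b-b'|\ge\gamma_i-u-v$ and $|a-a'|+|b-b'|+|c-c'|\ge\gamma_i$, contradicting our assumption. Summing the resulting per-triple bounds over $i=1,\dots,n/3$ and dividing by $2$ gives the claimed inequality.

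I expect the only delicate point to be checking that the two numerical hypotheses ($p_{3i}>\tfrac45 p_{3i-2}$ and $p_{3i-1}>\tfrac34 p_{3i}$) are exactly what the argument needs: the first is what forces $2\sqrt{ac}-a\ge\tfrac34 a$ (and, together with $c<a$, makes $2\sqrt{ac}-c$ even larger), while both together pin $\gamma_i$ below $\tfrac34 a$ so that the elementary inequality is applicable. Everything else is routine triangle-inequality bookkeeping, and the degenerate cases $a'=0$ or $c'=0$ are automatically subsumed, since then $u$ or $v$ already exceeds $\gamma_i$ and we are not in the contradiction hypothesis at all.
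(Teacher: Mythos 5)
Your proof is correct, and it reaches the per-triple bound by a genuinely different route than the paper. Both arguments reduce to showing, for each triple, that $|p_{3i-2}-q_{3i-2}|+|p_{3i-1}-q_{3i-1}|+|p_{3i}-q_{3i}|\geq \max(0,\sqrt{p_{3i-2}p_{3i}}-p_{3i-1})$ using only the single log-concavity constraint $q_{3i-1}\geq\sqrt{q_{3i-2}q_{3i}}$, but the paper proceeds by first arguing that an optimal $q$ on the triple satisfies $q_{3i-2}\leq p_{3i-2}$, $q_{3i-1}\geq p_{3i-1}$, $q_{3i}\leq p_{3i}$, then applying the mean value theorem to $f(x_1,x_2,x_3)=\sqrt{x_1x_3}-x_2$ and splitting into cases according to whether $q_{3i-2}$ or $q_{3i}$ has fallen below half of the corresponding $p$-value (handling those cases by direct numerical estimates from the hypotheses). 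You instead run a contradiction argument built on the explicit perturbation inequality $\sqrt{(a-u)(c-v)}\geq\sqrt{ac}-(u+v)$, proved by squaring, with the hypotheses $\tfrac45 a<c<a$ and $b>\tfrac34 c$ entering exactly where you say they do: they force the coefficients $2\sqrt{ac}-c$ and $2\sqrt{ac}-a$ to be at least $\tfrac34 a$ and pin $\gamma_i$ below $\tfrac25 a$, which keeps $u+v$ in the inequality's range of validity. Your version is more elementary (no calculus, no appeal to an optimizer whose existence and structure must be justified) and makes the role of the numerical constants more transparent; the paper's version, while correct, leans on a slightly informal "move $q$ toward $p$" optimality step and a gradient bound that requires its own case analysis. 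I verified your key inequality and the derived bounds ($2\sqrt{ac}-a\geq(4/\sqrt5-1)a>\tfrac34 a$, $\gamma_i<\tfrac25 a$), and the degenerate cases are indeed vacuous under the contradiction hypothesis, so no gap remains.
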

\begin{proof}
It's sufficient to show that $|p_{3i-2}-q_{3i-2}|+|p_{3i-1}-q_{3i-1}|+|p_{3i}-q_{3i}| \geq \max (0, \sqrt{p_{3i-2}p_{3i}}-p_{3i-1})$ for all $1 \leq i \leq \frac{n}{3}$. For a log concave $q$, $\sqrt{q_{3i-2}q_{3i}}-q_{3i-1}\leq  0$ must hold for all $i$. Fix a given $i$, given $p$ a distribution over $[n]$,  we will find such $q_{3i-2}, q_{3i-1},q_{3i}$ that minimizes $|p_{3i-2}-q_{3i-2}|+|p_{3i-1}-q_{3i-1}|+|p_{3i}-q_{3i}|$ subject to the constraint $q_{3i-1}\geq\sqrt{q_{3i}q_{3i-2}}$. If $q_{3i-2} > p_{3i-2}$,  we get a better set of $q_{3i-2}, q_{3i-1},q_{3i}$ by taking $q_{3i-1}=p_{3i-2}$ and keeping $q_{3i-1},q_{3i}$ unchanged. Similar reasoning applies to the case of $q_{3i-1} < p_{3i-1}$ or $q_{3i} > p_{3i}$. Therefore, we have that the optimal $q_{3i-2}, q_{3i-1},q_{3i}$ satisfy $q_{3i-2} \leq p_{3i-2}, q_{3i-1} \geq p_{3i-1}, q_{3i} \leq p_{3i}$, so
$$|p_{3i-2}-q_{3i-2}|+|p_{3i-1}-q_{3i-1}|+|p_{3i}-q_{3i}|=p_{3i-2}-q_{3i-2}+q_{3i-1}-p_{3i-1}+p_{3i}-q_{3i}.$$
Let $\vec{p}=(p_{3i-2},p_{3i-2},p_{3i})$ and $\vec{q}=(q_{3i-2},q_{3i-2},q_{3i})$, $f:\mathbb{R}^3 \rightarrow \mathbb{R}$ is a function where $f(\vec{x})=\sqrt{x_{3i-2}x_{3i}}-x_{3i-1}$, applying the mean value theorem, we have
$$f(\vec{p})-f(\vec{q})=(\vec{p}-\vec{q}) \cdot \nabla f(\vec{x})$$
for some $\vec{x}=(x_1,x_2,x_3)$ between $\vec{p}$ and $\vec{q}$. In particular, it's clear that $q_{3i-2} \leq x_1 \leq p_{3i-2}$, $p_{3i-1} \leq x_2 \leq q_{3i-1}$ and $q_{3i} \leq x_3 \leq p_{3i}$. Expanding the dot product, we can get
$$f(\vec{p})-f(\vec{q})=\frac{1}{2}\sqrt{\frac{x_3}{x_1}}(q_{3i-2}-p_{3i-2})+(p_{3i-1}-q_{3i-1})+\frac{1}{2}\sqrt{\frac{x_1}{x_3}}(q_{3i}-p_{3i}).$$
Notice that if $q_{3i-2}>\frac{p_{3i-2}}{2}$ and $q_{3i}>\frac{p_{3i}}{2}$, using the relation $p_{3i-2} > p_{3i}> \frac{4}{5}p_{3i-2}$, we have $x_1 \leq p_{3i-2} \leq 2p_{3i} <4q_{3i} \leq 4x_3$ and $x_3 \leq p_{3i} \leq 2p_{3i-2} <4q_{3i-2} \leq 4x_1$, we have $f(\vec{p})-f(\vec{q}) \leq |p_{3i-2}-q_{3i-2}|+|p_{3i-1}-q_{3i-1}|+|p_{3i}-q_{3i}|$ since $\frac{1}{2}\sqrt{\frac{x_3}{x_1}},\frac{1}{2}\sqrt{\frac{x_1}{x_3}}<1$.

On the other hand, if $q_{3i-2} \leq \frac{p_{3i-2}}{2}$, by manipulating the relations between $p_{3i-2},p_{3i-1},p_{3i}$ in the hypothesis, we have
$$|p_{3i-2}-q_{3i-2}|+|p_{3i-1}-q_{3i-1}|+|p_{3i}-q_{3i}| \geq |p_{3i-2}-q_{3i-2}| \geq \frac{p_{3i-2}}{2} \geq p_{3i-2}-\frac{5}{8}p_{3i}>p_{3i-2}-\frac{5}{6}p_{3i-1}>\sqrt{p_{3i-2}p_{3i}}-p_{3i-1}.$$
Similarly, with the case that $q_{3i} \leq \frac{p_{3i}}{2}$, we can show that
$$|p_{3i-2}-q_{3i-2}|+|p_{3i-1}-q_{3i-1}|+|p_{3i}-q_{3i}| \geq |p_{3i}-q_{3i}| \geq \frac{p_{3i}}{2} = (\frac{5}{4}-\frac{3}{4})p_{3i} \geq \frac{5}{4}p_{3i}-p_{3i-1}>\sqrt{p_{3i-2}p_{3i}}-p_{3i-1}.$$

So in any case we have that
$$
|p_{3i-2}-q_{3i-2}|+|p_{3i-1}-q_{3i-1}|+|p_{3i}-q_{3i}| \geq \sqrt{p_{3i-2}p_{3i}}-p_{3i-1}.
$$
Summing over $i$ yields our result.
\end{proof}

Using this we show that a distribution from $D_{no}$ is likely far from log-concave.
\begin{lem}
With $99\%$ probability, a random distribution drawn from $D_{no}$ is $\epsilon$ far from log concave.
\end{lem}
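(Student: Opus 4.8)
The plan is to follow the template of the two monotonicity ``far''-lemmas: convert a lower bound on the distance from log-concavity into a sum over the $n/6$ swapped triples, show that each swap independently produces a fixed-size violation with probability $1/m$, and finish with a Chernoff bound. A fact used throughout is that $|\delta_i| = O(1/n^3)$ for every $i$: this is immediate from $C_i = \Theta(1)$ together with $\tfrac{Cm^3\eps}{n} < 1/n^3$, which in turn follows from $n^2 \ll C^{-1}\eps^{-1}m^{-3}$ (equivalently the standing assumption $n < \tfrac{1}{C^2\eps^{1/2}(\log 1/\eps)^{3/2}}$). In particular every perturbation is $o(Q_j)$.

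First I would check that $p \sim D_{no}$ almost surely satisfies the hypotheses of Lemma~\ref{distance from log concave lem}, namely $p_{3k-2} > p_{3k} > \tfrac45 p_{3k-2}$ and $p_{3k-1} > \tfrac34 p_{3k}$ for every $1 \le k \le n/3$. Each triple is of the form $(6i-5,6i-4,6i-3)$ or $(6i-2,6i-1,6i)$, and only the middle entry (respectively $6i-4$ or $6i-1$) is moved. Since $Q_j = \tfrac{b}{n}e^{-(j/n)^2}$, consecutive ratios across such a block are $e^{\pm O(1/n)} = 1 \pm o(1)$, so all three inequalities hold for $Q$ itself with room to spare; as each entry involved is changed by a multiplicative factor $1 \pm O(1/n^2)$, the inequalities persist for $n$ large, no matter the sign of $\delta_i$.

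Next I would compute the contribution of the triple $(6i-2,6i-1,6i)$, i.e.\ index $k = 2i$ in Lemma~\ref{distance from log concave lem}. Bins $6i-2$ and $6i$ are untouched and $p_{6i-1} = Q_{6i-1} - \delta_i$, so the defining identity $C_i = n^3(Q_{6i-1} - \sqrt{Q_{6i}Q_{6i-2}})$ gives
$$
\sqrt{p_{6i-2}p_{6i}} - p_{6i-1} = \sqrt{Q_{6i}Q_{6i-2}} - Q_{6i-1} + \delta_i = -C_i/n^3 + \delta_i .
$$
In the unprimed parametrization $\delta_i = C_i/n^3 - \tfrac{Cm^3\eps}{n}\bigl(\cos(\tfrac{2\pi(a_i+1/2)}{m}) + \cos(\tfrac{\pi}{m})\bigr)$ with $a_i$ uniform mod $m$, so the $C_i$ cancels and this equals $-\tfrac{Cm^3\eps}{n}\bigl(\cos(\tfrac{2\pi(a_i+1/2)}{m}) + \cos(\tfrac{\pi}{m})\bigr)$. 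As in the proof of Lemma~\ref{F bounds lem} (using that $m$ is odd), this quantity is positive if and only if $a_i = (m-1)/2$, an event of probability $1/m$, in which case it equals $\tfrac{Cm^3\eps}{n}\bigl(1 - \cos(\tfrac{\pi}{m})\bigr) \ge \tfrac{Cm\eps}{n}$ for $m$ large. These events are independent over $i$, and the remaining terms in the sum of Lemma~\ref{distance from log concave lem} are nonnegative and can simply be discarded.

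Finally, letting $X$ be the number of $i \in [n/6]$ with $a_i = (m-1)/2$, we have $X \sim \mathrm{Bin}(n/6,1/m)$ with mean $n/(6m)$; since $m = O(\log(1/\eps))$ and $n$ is a sufficiently large multiple of $\log(1/\eps)$, this mean is a sufficiently large constant, so $X > n/(12m)$ with probability at least $99\%$ by a Chernoff bound. On that event Lemma~\ref{distance from log concave lem} shows that $p$ has distance at least $\tfrac12 \cdot X \cdot \tfrac{Cm\eps}{n} > \tfrac12 \cdot \tfrac{n}{12m} \cdot \tfrac{Cm\eps}{n} = \tfrac{C\eps}{24} > \eps$ from every log-concave distribution, once $C > 24$. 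The hard part will be the bookkeeping of the first step — confirming that every ratio hypothesis of Lemma~\ref{distance from log concave lem} really survives at and around the perturbed bins even for the extreme (including negative) values of $\delta_i$ — whereas the cancellation of $C_i$, which is exactly why $Q$ and the $C_i$ were chosen this way, reduces the core computation to the quantity already analyzed in Lemma~\ref{F bounds lem}.
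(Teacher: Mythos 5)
Your proposal is correct and follows essentially the same route as the paper: apply Lemma \ref{distance from log concave lem}, note that $\sqrt{p_{6i-2}p_{6i}}-p_{6i-1} = \delta_i - C_i/n^3$ is positive with independent probability $1/m$ and then of size $\Omega(Cm\eps/n)$, and finish with a Chernoff/binomial tail bound on $X\sim\mathrm{Bin}(n/6,1/m)$. Your explicit verification that $p$ satisfies the ratio hypotheses of Lemma \ref{distance from log concave lem} is a step the paper silently omits, and is a worthwhile addition rather than a different approach.
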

\begin{proof}
Let $p$ be taken from $D_{no}$ and let $q$ be an arbitrary log-concave distribution over $[n]$. Applying Lemma \ref{F bounds lem}, we note that for each $i$ there is independently a $1/m$ probability that $\delta_i = C_i/n^3 + \Omega(Cm\eps/n).$ Let $X$ be the number of such $i$'s. We note that for each such $i$ that
\begin{align*}
p_{6i-1} - \sqrt{p_{6i}p_{6i-2}} & = \\
& = Q_{6i-1} - \sqrt{Q_{6i}Q_{6i-2}}- \delta_i\\
& = C_i/n^3 - \delta_i < -\Omega(Cm\eps/n).
\end{align*}
Therefore, by Lemma \ref{distance from log concave lem}, $\dtv(p,q) \geq \Omega(XCm\eps/n).$ It thus suffices to show that with $99\%$ probability that $X = \Omega(n/m)$. However, as $X\sim\mathrm{Bin}(n/6,1/m)$, this is clear.
\end{proof}

Finally, we can complete our proof of Theorem \ref{log concavity thm}.
\begin{proof}
We begin by proving it for $n$ less than $\frac{1}{C^2\epsilon^{\frac{1}{2}}(\log \frac{1}{\epsilon})^{\frac{3}{2}}}$ and a multiple of $6$. Suppose that there is a tester that can reliably distinguish between a log-concave distribution and one that is $\eps$-far using $N < C^{-3}n \eps^{-2} \log^{-7}(1/\eps)$ samples. As a distribution from $D_{yes}$ is log-concave and one from $D_{no}$ is likely $\eps$-far our tester can reliably distinguish the two. However, using the $Q'$ interpretation of our construction, we have $B=O(N/n)$ so $Bx_{max}^2 = O(C^2m^6\eps^2 N/n) = O(C^{-1}/\log(n/\eps x_{max})).$ Thus, we can apply Corollary \ref{main bound cor} to see that $\dtv(D_{yes}^N,D_{no}^N)<1/100$ which contradicts our algorithm being able to reliably distinguish them.

For other $n$, we can just apply this result to $n_0$, the largest integer that satisfies our conditions. As a log-concave distribution on $[n_0]$ is also a log-concave distribution over $[n]$, this gives a reduction between the testing problems and completes the proof.
\end{proof}

\section{Conclusion}

In this paper we have produced a general framework for proving distribution testing lower bounds for properties defined by local inequalities between the individual bin probabilities. Applying it requires finding an instantiation of our construction so that many bins satisfy these inequalities tightly and changing their values slightly will break the property in question. Usually, this technique should give lower bounds comparable to the testing-by-learning algorithm of $O(n/\eps^2)$ samples up to logarithmic factors so long as $n$ is not too big, while for larger values of $n$ it will often fail to find further improvements.

As applications of this new technique we have proved new lower bounds for monotonicity testing, and nearly optimal lower bounds for log-concavity testing.

\end{flushleft}

\end{document}